\documentclass{article}

\usepackage{microtype}
\usepackage{graphicx}
\usepackage{tikz}
\usetikzlibrary{positioning, calc}
\usepackage{subcaption}
\usepackage{booktabs} 

\usetikzlibrary{shapes.geometric, arrows.meta, positioning, fit, backgrounds, calc, shadows}
\usepackage{hyperref}
\usepackage{multirow}

\usepackage[accepted]{icml2026}

\usepackage{amsmath}

\usepackage{amssymb}
\usepackage{mathtools}
\usepackage{amsthm}

\usepackage[capitalize,noabbrev]{cleveref}

\theoremstyle{plain}
\newtheorem{theorem}{Theorem}[section]

\newtheorem{lemma}[theorem]{Lemma}

\theoremstyle{definition}

\newtheorem{assumption}{Assumption}
\theoremstyle{remark}

\theoremstyle{definition}
\newtheorem{example}{Example}
\usepackage[textsize=tiny]{todonotes}

\newcommand{\F}{Fr\'echet }

\DeclareMathOperator*{\argmin}{arg\,min}

\icmltitlerunning{DeSI}

\begin{document}

\twocolumn[
  \icmltitle{Deep Single-Index \F Regression}



  \icmlsetsymbol{equal}{*}

  \begin{icmlauthorlist}
    \icmlauthor{Muqing Cui}{ucd}
    \icmlauthor{Yidong Zhou}{ucd}
    \icmlauthor{Su I Iao}{ucd}
    \icmlauthor{Hans-Georg M\"uller}{ucd}
  \end{icmlauthorlist}

  \icmlaffiliation{ucd}{Department of Statistics, University of California, Davis, CA, United States}

  \icmlcorrespondingauthor{Hans-Georg M\"uller}{hgmueller@ucdavis.edu}

  \icmlkeywords{Machine Learning, ICML}

  \vskip 0.3in
]



\printAffiliationsAndNotice{}  

\begin{abstract}
Predicting outputs that are located in non-Euclidean spaces, such as probability distributions, networks, and symmetric positive-definite matrices, is becoming increasingly important in modern data analysis, particularly when inputs are high-dimensional. We propose \emph{DeSI} (Deep Single-Index Fr\'echet Regression), a semiparametric framework for regression with metric space-valued outputs and multivariate inputs that assumes a single-index structure for the conditional Fr\'echet mean. DeSI estimates an interpretable index direction, which quantifies the relative importance of inputs, using a deep neural network, and performs Fr\'echet regression along the resulting one-dimensional index in the target metric space. This structure mitigates the curse of dimensionality while retaining interpretability, which stands in contrast to standard deep neural networks. We establish theoretical guarantees for DeSI, including uniform approximation and convergence rates, and demonstrate its strong predictive performance through simulations on distributions, networks, and symmetric positive-definite matrices, as well as an application to compositional mood data from New Jersey.
\end{abstract}

\section{Introduction}
Regression with non-Euclidean outputs has become increasingly important as modern data analyses routinely involve structured responses such as probability distributions \citep{pete:22}, networks \citep{zhou2022network}, symmetric positive-definite matrices \citep{than:23}, and compositional data \citep{scealy2014colours}. Such data, often referred to as \emph{random objects} \citep{mull:16}, typically reside in general metric spaces and do not admit standard algebraic operations such as addition or scalar multiplication, limiting the applicability of classical regression methods designed for Euclidean outputs.

Recent advances in statistical learning have extended regression methodology beyond Euclidean outputs to responses in general metric spaces. Early approaches relied on Euclidean embeddings \citep{faraway2014regression} or kernel-based similarity measures \citep{hein2009robust}. \emph{Fr\'echet regression} \citep{petersen2019frechet} extends classical regression to metric space-valued outputs; global Fr\'echet regression extends multiple linear regression and local Fr\'echet regression generalizes local linear smoothing. Subsequent work expanded the Fr\'echet regression paradigm to include sufficient dimension reduction \citep{ying2022frechet, zhang2024dimension}, principal component regression \citep{song2023errors}, additive regression models \citep{lin2023additive, han2020additive, jeon2020additive, jeon2025additive} and tree-based methods \citep{capitaine2024frechet, qiu2024random, zhou2025fr}.

Despite these advances, high-dimensional predictors remain a central challenge. Global Fr\'echet regression can be too restrictive to capture complex regression relationships, while kernel-based local Fr\'echet regression suffers from the curse of dimensionality when applied directly to multivariate inputs. Dimension reduction is therefore essential for scalable and flexible modeling, particularly in applications where interpretability of predictor effects is desired.  Deep neural networks have become a standard tool for learning flexible representations from high-dimensional inputs. However, existing deep learning approaches for metric space-valued outputs \citep{iao2025deep} often rely on multi-stage pipelines, in which representation learning and Fr\'echet regression are trained separately. As a result, learned representations are not directly optimized for the final Fr\'echet regression objective, and interpretability is typically limited.

We propose \textbf{Deep Single-Index Fr\'echet Regression (DeSI)}, a semiparametric, \emph{end-to-end} framework for regression with metric space-valued outputs and high-dimensional inputs. DeSI uses a deep neural network to learn a \emph{one-dimensional index} and performs local Fr\'echet regression along this index in the target metric space. This design mitigates the curse of dimensionality while retaining \emph{interpretability} through an estimated index direction that quantifies the relative importance of predictors. The proposed framework integrates index learning and Fr\'echet regression within a unified optimization objective, enabling the learned index to be directly tailored to prediction in the target metric space. This coupling allows DeSI to leverage the flexibility of deep neural networks while preserving the statistical structure of single-index Fr\'echet models.

Our main contributions are summarized as follows:
\begin{itemize}
    \item We propose DeSI, a deep single-index Fr\'echet regression framework that jointly learns an index function via a neural network and performs local Fr\'echet regression for metric space-valued outputs within a unified optimization objective.
    \item DeSI yields an interpretable index direction that quantifies the relative importance of predictors, combining the representational flexibility of deep learning with the interpretability of single-index models.
    \item We establish theoretical guarantees for DeSI, including universal approximation of the index function and convergence rates for the resulting estimator.
    \item Through simulations and real-data applications, we demonstrate that DeSI consistently outperforms existing Fr\'echet regression and deep learning baselines.
\end{itemize}

The code implementation of this paper is available at the repository: \url{https://github.com/ChopinMQ/Deep-Single-Index-Frechet-Regression}.

\subsection{Related Work}
\paragraph{Single-index Fr\'echet regression.}
Single-index Fr\'echet regression methods project multivariate predictors onto a scalar index and apply local Fr\'echet regression along the index, reducing effective input dimensionality and providing interpretable direction estimates \citep{bhattacharjee2023single, ghosal2023frechet}. These approaches rely on linear index functions and computationally intensive optimization procedures, which can limit flexibility and scalability. DeSI generalizes this framework by learning the index using a deep neural network while retaining interpretability. 

\paragraph{Deep learning for metric space-valued regression.}
Several recent methods integrate deep learning with Fr\'echet regression by learning representations or weights for constructing Fr\'echet means \citep{iao2025deep,zhou2025end,kim2025dfnn}. In particular, Deep Fr\'echet Regression \citep{iao2025deep} employs neural networks to relate Euclidean inputs to a low-dimensional manifold representation of the metric space-valued output; however, it relies on a restrictive low-dimensional manifold assumption and a multi-stage training pipeline. 
In contrast, DeSI integrates index learning and local Fr\'echet regression in a unified end-to-end framework and explicitly enforces a single-index structure to balance flexibility, scalability, and interpretability.


\section{Preliminaries} \label{chap2}
Let $(\Omega, d)$ be a complete and separable metric space, and let $Y$ be a random object taking values in $\Omega$. Assume that $Y$ has finite second moment, that is, there exists $y\in\Omega$ such that $\mathbb{E}\left[d^2(Y,y)\right] < \infty$. Due to the lack of algebraic operations such as addition and scalar multiplication in general metric spaces, the classical definition of expectation is no longer applicable. The \emph{\F mean} extends the definition of expectation to general metric spaces \citep{frechet1948elements},
\begin{align*}
    \mathbb{E}_{\oplus}(Y)
    = \argmin_{y \in \Omega} \; \mathbb{E}\!\left[ d^2(y, Y)\right].
\end{align*}

In classical regression with a scalar output $Y \in \mathbb{R}$ and a vector input $X \in \mathbb{R}^p$, the target is the conditional mean $\mathbb{E}(Y \mid X = x)$. When the output instead takes values in a general metric space $\Omega$, a natural extension is the \emph{conditional Fr\'echet mean} \citep{petersen2019frechet} $m(x)=\mathbb{E}_{\oplus}(Y \mid X = x)$, which minimizes the conditional expected squared distance and reduces to the conditional mean in the Euclidean case:
\begin{align*}
    &\mathbb{E}_{\oplus}(Y \mid X = x)
    = \argmin_{y \in \Omega} M(x,y),
    \\
    &M(x,y) = \mathbb{E}\!\left[ d^2(y, Y) \mid X = x \right].
\end{align*}

Since the conditional \F mean $m(x)$ is generally unknown and may depend nonlinearly on $x$, \citet{petersen2019frechet} proposed \emph{local \F regression (LFR)} as a nonparametric estimator that generalizes local linear smoothing to metric space-valued outputs. This estimator computes a weighted \F mean of the observed outputs, where the weights are obtained via kernel-based local linear smoothing in the input space and depend on the proximity between each $X_i$ and the query point $x$. This construction directly parallels local linear regression in Euclidean spaces, with squared distances $d^2(\cdot,\cdot)$ replacing squared Euclidean loss.

Formally, given i.i.d. samples $\{(X_i,Y_i)\}_{i=1}^n$, the LFR estimator at $x$ is defined as the minimizer of the weighted \F objective
\begin{align}
    \widetilde{M}_h(x,y)
    = n^{-1} \sum_{i=1}^n \widetilde{w}_{in}(x,h)\, d^2(Y_i,y),\label{eq:lfr_tilde}
\end{align}
where the weights are given by $\widetilde{w}_{in}(x,h)=\widetilde{\sigma}_0^{-2} K_h(X_i-x)\{\widetilde{\mu}_2-\widetilde{\mu}_1(X_i-x)\}$, with $\widetilde{\mu}_j = n^{-1}\sum_{i=1}^n K_h(X_i-x)(X_i-x)^j$, $\widetilde{\sigma}_0=\widetilde{\mu}_0\widetilde{\mu}_2-\widetilde{\mu}_1^2$, and $h$ denoting the bandwidth associated with the kernel $K_h(\cdot)=h^{-1}K(\cdot/h)$. Further intuition and technical details 
are provided in Appendix~\ref{app:lfr}.


To illustrate the generality of the proposed DeSI framework, we next introduce several representative metric spaces that arise in real data applications. These spaces are used throughout our simulation studies and real-data analyses, demonstrating the broad applicability and adaptability of the proposed methodology.

\begin{example}[Symmetric positive-definite matrices]
\label{exm:spd}
    Consider the space of $q\times q$ symmetric positive-definite (SPD) matrices, denoted by $\mathcal{S}_{+}^q$, which commonly arise as covariance or correlation matrices. Several metrics have been proposed for $\mathcal{S}_{+}^q$, including the Frobenius, power \citep{dryden2009non}, log-Cholesky \citep{lin2019riemannian}, and Bures--Wasserstein metrics \citep{bhatia2019bures}. We focus on the log-Cholesky metric, which admits a convenient representation via Cholesky decompositions. For $k=1, 2$, let $S_k = L_k L_k^\top$ be the Cholesky decompositions of $S_k$, where $L_k$ is lower triangular with positive diagonal entries, and write $L_k = D_k T_k$ with $D_k$ diagonal and $T_k$ unit lower triangular. The log-Cholesky distance between $S_1$ and $S_2$ is 
    \begin{align*}
    d_{\mathrm{LC}}(S_1,S_2)
    = \sqrt{\|\log D_1 - \log D_2\|_F^2 + \|T_1 - T_2\|_F^2}.
    \end{align*}
    SPD-valued data and geometry-aware learning methods have been widely studied in machine learning, including both discriminative \citep{wang2023u} and generative models \citep{li2024spd} defined directly on $\mathcal{S}_{+}^q$.
\end{example}

\begin{example}[Networks]
\label{exm:net}
    Consider the space of simple, undirected, weighted networks with a finite number of nodes and bounded edge weights. Each network can be uniquely characterized by its graph Laplacian, which is a symmetric positive semi-definite matrix. Equipped with the Frobenius metric $d_F$, the space of graph Laplacians can be used to characterize the space of networks \citep{kolaczyk2014statistical, zhou2022network, severn2022manifold}. For instance, graph convolutional networks define convolutional operators using the graph Laplacian,  exploiting its spectral properties to encode the intrinsic geometric and relational structure of graph-structured data \citep{kipf2017semisupervised}. These have proven to be of practical value in fields such as protein design \citep{notin2024machine} and atomistic materials chemistry \citep{batatia2025foundation}.
\end{example}

\begin{example}[Univariate Probability Distributions]
\label{exm:mea}
    Consider the Wasserstein Space $(\mathcal{W}, d_{\mathcal{W}})$ of univariate probability distributions with finite second moments, equipped with the 2-Wasserstein metric 
    \begin{align*}
        d_{\mathcal{W}}^2(\mu_1, \mu_2)=\int_{0}^{1} \left[F_{\mu_1}^{-1}(p)-F_{\mu_2}^{-1}(p)\right]^2 \text{d}p,
    \end{align*}
    where $F_{\mu_1}^{-1}$ and $F_{\mu_2}^{-1}$ are quantile functions for $\mu_1$ and $\mu_2$ respectively \citep{vill:03, ambr:08, panaretos2020invitation}.  It has been widely used, including Wasserstein auto-encoders \citep{kolouri2019sliced, tolstikhin2017wasserstein} and Wasserstein regression \citep{chen:23, chen:23:2}. 
\end{example}

\begin{example}[Compositional Data]
\label{exm:com}
    Compositional observations encode relative contributions across categories and are subject to a unit-sum constraint. Such data arise in a variety of settings, including resource allocation across daily activities, normalized attention weights in neural networks, and fractional usage patterns in transportation systems. Let
    \begin{align*}
        \Delta^{d-1}=\left\{
        u\in\mathbb{R}^d:\;
        u_j \ge 0,\;
        \sum_{j=1}^d u_j = 1
        \right\}
    \end{align*}
    denote the $(d-1)$-dimensional unit simplex. A common geometric representation is obtained via the square-root transformation $y=\sqrt{u}$, which maps $\Delta^{d-1}$ onto the positive orthant of the unit sphere $\mathbb{S}^{d-1}_+$ \citep{scealy2011regression, scealy2014colours}. Under this embedding, the natural geodesic distance between two compositions $y_1,y_2 \in \mathbb{S}^{d-1}_+$ is given by $d_g(y_1,y_2)=\arccos\!\left(y_1^\top y_2\right)$, corresponding to the arc length of the shortest great-circle path connecting the two points.
\end{example}

\section{Deep Single-Index \F Regression}
\subsection{Problem Formulation} \label{chap31}
\begin{figure}[tb]
\centering
\begin{tikzpicture}[
    node distance=4cm and 3cm,
    >=Stealth,
    auto,
    thick
]
  \node (X) at (0,0) {$X \in \mathcal{D}$};
  \node (E) at (6,0) {${\mathbb{E}}_{\oplus}(Y \mid X)$};
  \node (Z) at (3,3) {$Z \in \mathbb{R}$};

  \draw[->] (X) -- (Z) node[midway, above, sloped] {DNN: $g_0$};
  \draw[->] (Z) -- (E) node[midway, above, sloped] {Link function: $\zeta$};
  \draw[->] (X) -- (E) node[midway, below] {DeSI: $m=\zeta \circ g_0$};

\end{tikzpicture}
\caption{Schematic illustration of the Deep Single-Index \F Regression (DeSI) framework. A deep neural network learns a single index $Z=g_0(X)$, which is then mapped through a link function $\zeta$ to estimate the conditional \F mean ${\mathbb{E}}_{\oplus}(Y\mid X)$.}
\label{fig:desi_structure1}
\end{figure}

Let $(X,Y)$ be a random pair, where the predictor $X \in \mathbb{R}^p$ has compact support $\mathcal{D}$ and the output $Y$ takes values in a metric space $(\Omega,d)$. Our goal is to estimate the conditional \F mean $m(X) = \mathbb{E}_{\oplus}(Y \mid X)$. We assume that $m(\cdot)$ admits a \emph{single-index structure}, meaning that there exist a direction vector $\theta_0 \in \mathbb{R}^p$, a scalar index function $g_0(X) = X^\top \theta_0$ and a link function $\zeta : \mathbb{R} \mapsto \Omega$ such that $$m(X) = \zeta\left(g_0(X)\right).$$ As illustrated in Figure~\ref{fig:desi_structure1}, a deep neural network (DNN) is used to estimate the single index $Z = g_0(X)$, which is subsequently mapped to the conditional \F mean via the link function $\zeta$. This assumption reduces the potentially high-dimensional dependence of the conditional \F mean on $X$ to a one-dimensional representation $Z$, while retaining flexibility for modeling non-Euclidean outputs through the link function $\zeta$.

\begin{figure*}[tb]
    \centering
    \includegraphics[width=\textwidth]{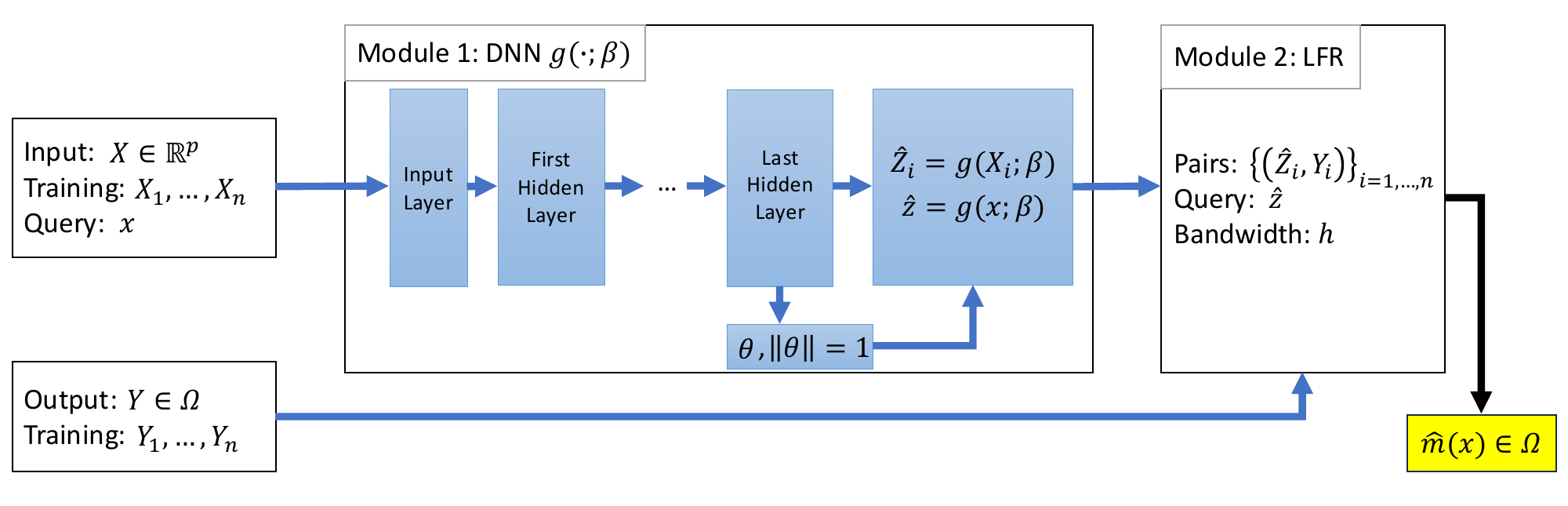}
    \caption{Deep Single-Index Fr\'echet Regression (DeSI) framework. 
    A neural network (DNN module) learns index directions and projects inputs onto a one-dimensional index, followed by LFR (LFR module) with a trainable parameter $h$ as the bandwidth.}
    \label{fig:structure}
\end{figure*}

The proposed DeSI framework is implemented using two interconnected modules: a DNN for single-index learning and a LFR module for output prediction. The full architecture is depicted in Figure~\ref{fig:structure}. The DNN, denoted by $g(\cdot;\beta)$ with parameters $\beta$, maps each input $X_i$ to a scalar index value $\widehat{Z}_i=g(X_i;\beta)$. Given the projected training pairs $\{(\widehat{Z}_i, Y_i)\}_{i=1}^n$, LFR is then applied along the one-dimensional index to predict the metric space-valued output.

\subsection{DNN for Single Index Learning} \label{chap32}
To learn the single-index projection, we employ a deep neural network (DNN) with $L$ hidden layers. The $l$th hidden layer contains $p_l$ neurons, with weight matrix $Q_l \in \mathbb{R}^{p_l \times p_{l-1}}$ and bias vector $b_l \in \mathbb{R}^{p_l}$ for $l = 1, \ldots, L$. Denoting the full collection of network parameters by $\beta=\{Q_l,b_l\}_{l=1}^{L}$, the DNN defines a composite mapping $g:\mathbb{R}^p \mapsto \mathbb{R}$ given recursively by
\begin{equation}
\begin{aligned}
    g(x) &= x ^ \top v_L(x) / \|v_L(x)\|,\\
    v_L(x) &= \sigma(Q_L v_{L-1}(x) + b_L),\\
    &\,\,\,\vdots\\
    v_0(x)&=x,
\end{aligned}
\end{equation}
where $p_0=p_L=p$ and $\sigma(u)=u\mathbf{1}\{u\ge0\}+\alpha u\mathbf{1}\{u<0\}$ denotes the leaky rectified linear unit (Leaky ReLU) activation function with $\alpha\in(0,1)$. Compared with the standard ReLU, Leaky ReLU maintains nonzero gradients for negative inputs, alleviating the dying ReLU problem and leading to more stable optimization \citep{maas2013rectifier}.

The output of the final layer $v_L(x) \in \mathbb{R}^p$ is normalized to unit length and interpreted as a direction vector $\widehat\theta(x) = v_L(x)/\|v_L(x)\|$, which induces a scalar index via projection $g(x;\beta) = x^\top \widehat\theta(x)$.

Evaluating this mapping at the training samples yields subject-specific directions $\widehat\theta_i = \widehat\theta(X_i)$ and corresponding indices $\widehat{Z}_i = X_i^\top \widehat\theta_i, i=1,\ldots,n$, which are used as inputs for the subsequent LFR module. To obtain a global index direction, we aggregate the sample-level directions $\{\widehat\theta_i\}_{i=1}^n$ by their intrinsic mean on the unit sphere and denote the resulting estimator by $\widehat\theta$. While the DNN produces input-dependent directions $\widehat\theta(x)$, the single-index model assumes a common underlying direction $\theta_0$. Under this assumption, the sample-level directions can be viewed as noisy realizations of $\theta_0$, and their intrinsic mean provides a natural estimator of the global direction.


\subsection{LFR Along the Learned Index} \label{chap33}

Given the estimated single-index values $\{\widehat{Z}_i\}_{i=1}^n$ produced by the DNN, we predict metric space-valued outputs using LFR along the one-dimensional index. Since the index values are estimated rather than directly observed, this step corresponds to an \emph{errors-in-variables} version of LFR.

Specifically, for a query index value $\widehat{z}\in\mathbb{R}$ and a bandwidth parameter $h>0$, the predicted output is obtained by solving
\begin{equation}\label{eq:lfr_implem}
\begin{aligned}
&\widehat{\zeta}_h(\widehat{z})
= \argmin_{y \in \Omega} \widehat{M}_h(\widehat{z},y), \\
&\widehat{M}_h(\widehat{z},y)
= n^{-1}\sum_{i=1}^n \widehat{w}_{in}(\widehat{z},h)\, d^2(Y_i,y),
\end{aligned}
\end{equation}
with  weights $\widehat{w}_{in}(\widehat{z},h)=\frac{1}{\widehat{\sigma}_0^2}K_h(\widehat{Z}_i - \widehat{z})\bigl[ \widehat{\mu}_2 - \widehat{\mu}_1 (\widehat{Z}_i - \widehat{z}) \bigr]$,  $\widehat{\mu}_j = n^{-1} \sum_{i=1}^n K_h(\widehat{Z}_i - \widehat{z})(\widehat{Z}_i - \widehat{z})^j,$ $\widehat{\sigma}_0=\widehat{\mu}_0\widehat{\mu}_2-\widehat{\mu}_1^2$.

Details on the numerical implementation of \eqref{eq:lfr_implem} for common metric spaces, including SPD matrices, networks, compositional data, and probability distributions, are provided in Appendix~\ref{app:optimization}.

\subsection{Model Training} \label{chap34}
As loss function for training the model we use 
\begin{align}
    \ell(\beta,h) = \frac{1}{n} \frac{\sum_{i=1}^{n}d^2(Y_i,\widehat{Y}_i)}{\widehat{V}_{\oplus}(Y)} + \frac{\lambda}{h},
    \label{eq:loss}
\end{align}
where $\widehat{Y}_i$ denotes the predicted output for $X_i$ and $\beta$ represents the parameter set of the DNN. Here $\widehat{V}_{\oplus}(Y) =n^{-1}\sum_{i=1}^n d^2\!\left(Y_i,\ \bar Y_{\oplus}\right)$ represents the sample \F variance with $\bar Y_{\oplus}=\argmin_{\omega\in\Omega} n^{-1}\sum_{i=1}^n d^2(Y_i,\omega)$ denoting the sample \F mean. Dividing by $\widehat{V}_{\oplus}(Y)$ standardizes the squared prediction error, making the loss invariant to the overall scale of the output space. This normalization also stabilizes training and facilitates the selection of the tuning parameter $\lambda$ across different metric spaces.

The second term $\lambda/h$ penalizes excessively small bandwidth values in the LFR step, thereby controlling overfitting and encouraging appropriate smoothing along the learned single index. Importantly, the bandwidth $h$ is treated as a \emph{learnable parameter} and is optimized jointly with the network parameters $\beta$, rather than being fixed or chosen in a separate tuning stage. This end-to-end estimation allows the degree of smoothing to adapt automatically to the data and avoids the need for manual bandwidth selection. The hyperparameter $\lambda>0$ is selected via grid search by minimizing the cross-validated prediction error; see Appendix~\ref{app:hyper} for details.

The joint optimization of $(\beta,h)$ is performed using the iterative procedure summarized in Algorithm~\ref{algorithm}. At each training epoch, the current network parameters are used to compute single-index projections of the inputs, after which LFR is applied in a leave-one-out manner to obtain predicted outputs. The loss in \eqref{eq:loss} is then evaluated and used to update both $\beta$ and $h$, with early stopping based on validation performance.

\begin{algorithm}[tb]
\caption{Deep Single-Index \F Regression (DeSI)}
\label{algorithm}
\begin{algorithmic}[1]
\REQUIRE Training data $\{(X_i,Y_i)\}_{i=1}^n$, number of epochs $T$, learning rate $\eta$, penalty parameter $\lambda$.
\ENSURE Estimated DNN parameters $\beta$ and bandwidth $h$.

\STATE Initialize DNN parameters $\beta^{(0)}$ and bandwidth $h^{(0)}$.
\STATE Split the data into a training set $\{(X_i,Y_i)\}_{i=1}^{k}$ and a validation set $\{(X_i,Y_i)\}_{i=k+1}^{n}$, with $k<n$.
\STATE Compute the sample \F variance $\widehat{V}_{\oplus}(Y)$ using $\{Y_i\}_{i=1}^{k}$.

\FOR{$t = 1$ {\bfseries to} $T$}

    \STATE Compute single-index projections $\widehat{Z}^{(t)}_i = g(X_i;\beta^{(t)})$ for $i=1,\ldots,k$.

    \FOR{$i = 1$ {\bfseries to} $k$}
        \STATE Using local \F regression with bandwidth $h^{(t)}$, compute the predicted output $\widehat{Y}_i^{(t)}$ at index $\widehat{Z}^{(t)}_i$ based on the projected pairs $\{(\widehat{Z}^{(t)}_j, Y_j)\}_{j=1,\, j\neq i}^{k}$.
    \ENDFOR

    \STATE Compute the training loss
    \[\frac{1}{k}\frac{\sum_{i=1}^{k} d^2\!\left(Y_i,\widehat{Y}_i^{(t)}\right)}{\widehat{V}_{\oplus}(Y)}
    +\frac{\lambda}{h^{(t)}}.\]

    \STATE Evaluate the validation loss using the validation set.
    \STATE Apply early stopping if the validation loss does not improve for five consecutive iterations.
    \STATE Update parameters $(\beta^{(t)}, h^{(t)})$ to $(\beta^{(t+1)}, h^{(t+1)})$ using gradient-based optimization with learning rate $\eta$.

\ENDFOR
\end{algorithmic}
\end{algorithm}

\section{Theoretical Properties} \label{chap4}
We establish approximation and rate guarantees for the proposed DeSI estimator. In particular, we show that sufficiently expressive neural networks can uniformly approximate the true single-index function, and we quantify how first-stage index estimation error propagates into the subsequent LFR step.

\subsection{Uniform Approximation of the Index Direction}
The approximation capability of the neural network for learning the single index is first established. We impose standard assumptions to ensure sufficient network capacity and identifiability of the index direction.

\begin{assumption}[Width and depth of neural network]\label{ass:nn}
    The neural network $g(\cdot;\beta)$ has arbitrary finite depth and width (the number of neurons per layer can be chosen sufficiently large depending on the desired approximation error $\delta$).
\end{assumption}

\begin{assumption}[Identifiability] \label{ass:ident}
For any $\theta \in \mathbb{S}^{p-1}$ with $\theta \neq \theta_0$, there exists 
$\delta > 0$ such that
\begin{align*}
    \mathbb{P}\Bigl( d\bigl(\zeta(X^\top \theta_0), \zeta(X^\top \theta)\bigr) > \delta \Bigr) > 0.
\end{align*}
\end{assumption}

\begin{assumption}[Density of the Index] \label{ass:density}
The index $Z = g_0(X)$ admits a Lebesgue density bounded away from zero on a compact interval containing its essential support.
\end{assumption}

Assumption~\ref{ass:nn} guarantees sufficient expressive power of the neural network, while Assumptions~\ref{ass:ident} and~\ref{ass:density} ensure identifiability of the true direction vector $\theta_0$.

\begin{theorem}[Uniform approximation of the estimated direction]\label{thm:uat_leaky}
Suppose Assumptions~\ref{ass:nn}--\ref{ass:density} hold. For any $\delta > 0$, there exists a Leaky ReLU neural network $g(\cdot;\beta_\delta)$ such that
\begin{align*}
    \sup_{x \in \mathcal{D}} \bigl| g(x;\beta_\delta) - g_0(x) \bigr| & <\delta,
    \\
    \|\widehat{\theta} - \theta_0\|_2 &< \delta / C_0,
\end{align*}
where $C_0$ denotes the radius of $\mathcal{D}$.
\end{theorem}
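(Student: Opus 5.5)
The natural route is an explicit construction. Since the target $g_0(x)=x^\top\theta_0$ has a constant direction, I would build a network whose final-layer output $v_L(x)$ is a positive multiple of $\theta_0$ for every $x\in\mathcal{D}$. Then $\widehat\theta(x)=\theta_0$ exactly, so $g(x;\beta)=x^\top\theta_0=g_0(x)$. This gives both inequalities with room to spare. Assumptions~\ref{ass:ident} and~\ref{ass:density} are used only to make $\theta_0$ a well-defined target, up to the usual sign and normalization convention $\theta_0\in\mathbb{S}^{p-1}$. The existence claim itself comes from Assumption~\ref{ass:nn}, which lets us choose the architecture.

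\textbf{Construction.} Take $L$ hidden layers with $Q_l=0$ for $l<L$ and constant biases chosen so that $v_{L-1}(x)$ is a fixed vector. The simplest choice is $Q_1=0$, $b_1=\mathbf{1}$, which gives $v_1\equiv\mathbf{1}$ since $\sigma(1)=1$; the intermediate layers then pass this constant along. In the last layer set $Q_L=0$ and $b_L=c\,\theta_0$ with $c>0$. The output is $v_L=\sigma(c\theta_0)$, but Leaky ReLU acts coordinatewise and scales negative entries by $\alpha$. So $\sigma(c\theta_0)\neq c\theta_0$ when $\theta_0$ has negative entries, and I need to pre-invert it.

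The fix is to choose $b_L=\sigma^{-1}(c\theta_0)$. This is possible because $\sigma$ is a bijection of $\mathbb{R}$ with inverse $u\mathbf 1\{u\ge0\}+\alpha^{-1}u\mathbf 1\{u<0\}$; this invertibility is the key property of Leaky ReLU here. Then $v_L(x)=c\theta_0\neq0$ and $\|v_L\|=c$. Hence $\widehat\theta(x)=\theta_0$ for all $x$, and $g(x;\beta)=g_0(x)$.

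\textbf{Robust version and the bound on $\widehat\theta$.} If one wants a non-degenerate network, for instance weights required to be nonzero, I would instead perturb the construction. A continuity or universal-approximation argument for Leaky ReLU networks then gives $\sup_x\|\widehat\theta(x)-\theta_0\|<\delta/C_0$, using that $v_L$ stays bounded away from $0$. From this,
\[
|g(x)-g_0(x)|=|x^\top(\widehat\theta(x)-\theta_0)|\le C_0\,\|\widehat\theta(x)-\theta_0\|<\delta .
\]
The global estimator $\widehat\theta$ is the intrinsic mean of $\{\widehat\theta_i\}$. These points lie in a geodesic ball around $\theta_0$ of radius proportional to $\delta/C_0$; for small $\delta$ this ball is contained in a hemisphere. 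A Fréchet mean of points in a small geodesically convex ball lies in that ball, so $\|\widehat\theta-\theta_0\|<\delta/C_0$ after adjusting constants, or exactly $0$ in the constant construction.

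\textbf{Main obstacle.} The delicate step is the second bound, not the first. The statement links $\|\widehat\theta-\theta_0\|$ to $\delta/C_0$, which suggests deducing direction closeness from sup-norm closeness of $g$. That deduction only works if $\widehat\theta(x)$ is nearly constant; a general $g$ uniformly close to $g_0$ can have wildly varying $\widehat\theta(x)$ near $x=0$. So I would avoid that direction altogether and prove direction closeness first, via the construction above. Then I would derive the function bound from it using $\|x\|\le C_0$, and control the spherical mean via convexity of small balls.
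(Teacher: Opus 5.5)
Your argument is correct, and it runs in the opposite direction from the paper's.

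\textbf{The paper's route.} It checks that Leaky ReLU is continuous and piecewise linear and that $g_0\in W^{n,\infty}$. It then invokes Yarotsky's approximation theorem to get a network with $\sup_{x\in\mathcal D}|g(x;\beta_\delta)-g_0(x)|<\delta$. Only after that does it deduce direction closeness. It rewrites this supremum as $\sup_x|x^\top(\widehat\theta-\theta_0)|$ and takes an inscribed ball $x_0+B_{C_0}\subset\mathcal D$, so its $C_0$ is an inradius. Evaluating at $x_\pm=x_0\pm C_0(\widehat\theta-\theta_0)/\|\widehat\theta-\theta_0\|_2$ then gives $C_0\|\widehat\theta-\theta_0\|_2<\delta$.

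\textbf{Your route.} You exhibit a network whose direction is exactly $\theta_0$; pre-inverting the activation through $b_L=\sigma^{-1}(c\theta_0)$ is the right fix. Both bounds then hold with zero error. You need neither an approximation theorem nor Assumptions~\ref{ass:ident}--\ref{ass:density}.

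\textbf{What each buys.} Your route gives rigor exactly at the step you flag as the obstacle. The paper's rewriting treats $g(\cdot;\beta_\delta)$ as $x\mapsto x^\top\widehat\theta$ with one global vector. The architecture, however, gives $g(x)=x^\top\widehat\theta(x)$. A uniform approximant of $g_0$ need not have $\widehat\theta(x)$ near $\theta_0$: the reflection $u=2\|x\|_2^{-2}(x^\top\theta_0)x-\theta_0$ is a unit vector with $x^\top u=x^\top\theta_0$ exactly. So the inscribed-ball step does not apply to a generic approximant. In addition, Yarotsky's theorem concerns networks with an affine output layer, not the normalized projection $x^\top v_L(x)/\|v_L(x)\|$. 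Your constant-direction network is precisely the object for which the paper's identification is legitimate. What the paper's route reaches for, and yours does not give, is a statement about generic or trained approximants. The theorem as stated is purely existential, though, so your construction suffices.

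\textbf{Perturbed variant.} Two small points on your optional version. First, you use $\|x\|_2\le C_0$ on $\mathcal D$, which is not the paper's meaning of $C_0$; the two agree only when $\mathcal D$ is a ball centred at the origin. You should therefore make $\sup_x\|\widehat\theta(x)-\theta_0\|_2$ smaller than both $\delta/C_0$ and $\delta/\sup_{x\in\mathcal D}\|x\|_2$, which continuity in the parameters allows. Second, no adjustment of constants is needed for the spherical mean, since chordal balls about $\theta_0$ are geodesic balls.
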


Theorem~\ref{thm:uat_leaky} implies that the learned single index $\widehat{z} = g(X;\beta_\delta)$ uniformly approximates the true index $z = g_0(X)$. Under the single-index structure, the corresponding aggregated estimator $\widehat{\theta}$ provides an approximation to the underlying direction $\theta_0$.


\subsection{Convergence Rate of the DeSI Estimator}

We next analyze the effect of index estimation error on the subsequent LFR step. Let $\widehat{\zeta}_h(\widehat{z})$ denote the LFR estimator evaluated at the estimated index using projected samples
$\{(\widehat{Z}_i, Y_i)\}_{i=1}^n$ and let $\widetilde{\zeta}_h(z)$ denote the LFR estimator evaluated at the true index using the true projections $\{(Z_i, Y_i)\}_{i=1}^n$. To obtain convergence rates for the LFR step and the overall DeSI estimator, we require additional regularity conditions on the kernel, the data-generating process, and the geometry of the output space. These assumptions are standard in the literature on \F regression with non-Euclidean outputs \citep{petersen2019frechet, chen2022uniform, schotz2022nonparametric, iao2025deep}, and are stated and discussed in detail in Appendix~\ref{app:ass}.

\begin{lemma}\label{lm:dfr_results}
Suppose $\|\widehat{z} - z\|_2 = O_p(b_n)$, $n h \to \infty$, and $h^{-3} b_n \to 0$. Under Assumptions~\ref{ass:kernel} and \ref{ass:frechet3} stated in Appendix~\ref{app:ass},
\[
d\left\{ \widehat{\zeta}_h(\widehat{z}),\, \widetilde{\zeta}_h(z) \right\}
=
O_p\left\{ \left(h^{-2} b_n\right)^{1/(\psi_3 - 1)} \right\}.
\]
\end{lemma}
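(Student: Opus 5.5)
We follow the standard perturbation argument for M-estimators, as in the errors-in-variables analysis of \citet{iao2025deep}. The idea is to bound the difference between the two criterion functions $\widehat{M}_h(\widehat{z},\cdot)$ and $\widetilde{M}_h(z,\cdot)$ uniformly in $y$. This uniform bound is then converted into a distance between their minimizers via the curvature condition with exponent $\psi_3$ in Assumption~\ref{ass:frechet3}.

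\textbf{Step 1 (weight perturbation).} We first show that
\[
\sup_i \left|\widehat{w}_{in}(\widehat{z},h)-\widetilde{w}_{in}(z,h)\right|
\]
is small in average. Write $\widehat{Z}_i-\widehat{z} = (Z_i - z) + \Delta_i$ with $|\Delta_i| = O_p(b_n)$, using the uniform bound of Theorem~\ref{thm:uat_leaky}. By the Lipschitz property of $K$ (Assumption~\ref{ass:kernel}), $|K_h(\widehat{Z}_i-\widehat{z})-K_h(Z_i-z)| \le C h^{-2} b_n$. The kernel is also compactly supported, so $|Z_i-z|\lesssim h$ on the effective support. From this we obtain $\widehat{\mu}_j-\widetilde{\mu}_j = O_p(h^{j-2}b_n + h^{j-1}b_n)$, while $\widetilde{\mu}_j \asymp h^j$ because the density is bounded below (Assumption~\ref{ass:density}) and $nh\to\infty$. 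Propagating these bounds through the ratio, whose denominator satisfies $\widetilde{\sigma}_0^2 \asymp h^4$, gives
\[
n^{-1}\sum_i |\widehat{w}_{in}-\widetilde{w}_{in}| = O_p(h^{-2}b_n).
\]
The condition $h^{-3}b_n\to 0$ is exactly what keeps $\widehat{\sigma}_0$ bounded away from zero relative to $h^4$, so that the ratio expansion is valid and the perturbation is negligible in the denominator.

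\textbf{Step 2 (uniform criterion bound).} The metric space is bounded, as assumed in Appendix~\ref{app:ass}, so $d^2(Y_i,y)\le \mathrm{diam}(\Omega)^2$. Step 1 then gives
\[
\sup_{y\in\Omega}\left|\widehat{M}_h(\widehat{z},y)-\widetilde{M}_h(z,y)\right| = O_p(h^{-2}b_n).
\]
This simple bound ignores the difference structure. It may be sharpened using $|d^2(Y_i,y)-d^2(Y_i,y')|\le 2\,\mathrm{diam}(\Omega)\, d(y,y')$, which yields
\[
\left|\{\widehat{M}_h-\widetilde{M}_h\}(y)-\{\widehat{M}_h-\widetilde{M}_h\}(\widetilde{\zeta}_h)\right| \le C h^{-2}b_n\, d(y,\widetilde{\zeta}_h).
\]

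\textbf{Step 3 (curvature argument).} Assumption~\ref{ass:frechet3} provides a local curvature lower bound for $\widetilde{M}_h(z,\cdot)$ near its minimizer $\widetilde{\zeta}_h(z)$, holding with probability tending to one:
\[
\widetilde{M}_h(z,y)-\widetilde{M}_h(z,\widetilde{\zeta}_h) \ge c\, d(y,\widetilde{\zeta}_h)^{\psi_3}.
\]
Existence and uniqueness of the minimizers, together with the consistency $d(\widehat{\zeta}_h,\widetilde{\zeta}_h)=o_p(1)$ (which follows from Step 2 and a standard argmin argument), place $\widehat{\zeta}_h(\widehat{z})$ in the neighborhood where this bound holds. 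Since $\widehat{\zeta}_h$ minimizes $\widehat{M}_h(\widehat{z},\cdot)$, the Lipschitz form of Step 2 gives
\[
c\,d^{\psi_3} \le \{\widetilde{M}_h-\widehat{M}_h\}(\widehat{\zeta}_h)-\{\widetilde{M}_h-\widehat{M}_h\}(\widetilde{\zeta}_h) \le C h^{-2}b_n\, d.
\]
Solving for $d$ yields $d \lesssim (h^{-2}b_n)^{1/(\psi_3-1)}$, which is the claimed rate.

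\textbf{Main obstacle.} The hardest step is Step 1: controlling the weight ratio uniformly, keeping track of the powers of $h$, and verifying that $h^{-3}b_n\to0$ is enough for $\widehat{\sigma}_0^2/\widetilde{\sigma}_0^2\to1$. If the Lipschitz refinement in Step 2 is not available in the stated assumptions, the crude sup bound would only give the weaker rate $(h^{-2}b_n)^{1/\psi_3}$. In that case we would instead use the Lipschitz property of $d^2$ in its second argument directly, as done above.
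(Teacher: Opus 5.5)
Your route is the standard errors-in-variables argument: perturb the local linear weights, control the difference of the two criteria in Lipschitz form, then apply the curvature bound with exponent $\psi_3$. The paper does not write this out; it imports it from \citet{iao2025deep} with index dimension one. So in substance you follow the same path, and the curvature step in Step~3 is correct given its inputs.

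The genuine gap is in Step~2. Boundedness of $\Omega$ is \emph{not} among the paper's assumptions: Appendix~\ref{app:ass} imposes only kernel, density, existence/separation, curvature and entropy conditions. Moreover, two of the paper's own examples, SPD matrices under $d_{\mathrm{LC}}$ and the Wasserstein space, are unbounded. You use $\mathrm{diam}(\Omega)<\infty$ in two places. The first is the global sup bound that feeds your consistency claim $d(\widehat\zeta_h,\widetilde\zeta_h)=o_p(1)$. The second is the Lipschitz constant $2\,\mathrm{diam}(\Omega)$, which is exactly what turns the exponent $1/\psi_3$ into $1/(\psi_3-1)$. The fallback in your last paragraph is the same assumption restated. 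The cleanest repair is to add $\mathrm{diam}(\Omega)<\infty$ explicitly as a hypothesis, as is standard in the local Fr\'echet regression results being cited. Otherwise you must localize the Lipschitz step:
for $y$ within $\eta_3$ of $\widetilde\zeta_h(z)$, use $|d^2(Y_i,y)-d^2\{Y_i,\widetilde\zeta_h(z)\}|\le d\{y,\widetilde\zeta_h(z)\}\,[d(Y_i,y)+d\{Y_i,\widetilde\zeta_h(z)\}]$;
note that $\widehat w_{in}-\widetilde w_{in}=0$ unless $|Z_i-z|\le h+O(b_n)$;
and bound $n^{-1}\sum_i|\widehat w_{in}-\widetilde w_{in}|\,d\{Y_i,\zeta(z)\}$ through a local conditional moment condition on $d\{Y,\zeta(z)\}$ given $Z$ near $z$.
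In that case the consistency step also needs an argument that does not take a supremum over all of $\Omega$.

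Second, the power counting in Step~1 does not deliver $O_p(h^{-2}b_n)$ from the bounds you list. Read literally, the denominator $\widetilde\sigma_0^2\asymp h^4$ makes the weights of order $h^{-3}$. That inflates every perturbation bound by $h^{-2}$, while the curvature constant in Assumption~\ref{ass:frechet3} stays fixed. This is a notational slip for the Petersen--M\"uller normalization $\mu_0\mu_2-\mu_1^2\asymp h^2$; compare the weight in \eqref{eq:llr_weight}, which makes $n^{-1}\sum_i\widetilde w_{in}=1$. Since the minimizers are unchanged by positive rescaling of the weights, you should work with $\widetilde w_{in}=O(h^{-1})$. Even then, your moment bounds $\widehat\mu_j-\widetilde\mu_j=O_p(h^{j-2}b_n)$ only give $\max_i|\widehat w_{in}-\widetilde w_{in}|=O_p(h^{-3}b_n)$. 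To reach the averaged bound $n^{-1}\sum_i|\widehat w_{in}-\widetilde w_{in}|=O_p(h^{-2}b_n)$, you must use that the difference vanishes outside $\{i:|Z_i-z|\le h+O(b_n)\}$. That set contains $O_p(nh)$ indices because $f_Z$ is bounded and $nh\to\infty$. Without this count the argument only yields $(h^{-3}b_n)^{1/(\psi_3-1)}$.
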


Combining this result 
with the convergence rate of LFR evaluated at the true index yields the overall convergence rate of the DeSI estimator $$\widehat{m}(\cdot)=\widehat{\zeta}_h(g(\cdot,\beta_\delta)).$$

\begin{theorem}[Convergence rate of DeSI]\label{thm:convergence_rate}
For a new input $x$ independent of the training sample, let $\widehat{z} = g(x;\beta_\delta)$. Under Assumptions~\ref{ass:kernel}--\ref{ass:frechet4} stated in Appendix~\ref{app:ass},
\begin{align*}
    d\big(\widehat{m}(x), m(x)\big) =\ &O_p\!\left\{\left(h^{-2}b_n\right)^{1/(\gamma_3-1)}\right\}+\\
    &O_p\!\left\{h^{2/(\gamma_1-1)}+(nh)^{-1/(2\gamma_2-2)}\right\}.
\end{align*}
\end{theorem}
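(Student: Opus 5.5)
The plan is a triangle-inequality decomposition. Write $z = g_0(x)$ for the true index of the new input, so that $m(x) = \zeta(z)$, while the estimator is $\widehat{m}(x) = \widehat{\zeta}_h(\widehat{z})$. Then
\[
d\big(\widehat{m}(x), m(x)\big) \le d\big(\widehat{\zeta}_h(\widehat{z}), \widetilde{\zeta}_h(z)\big) + d\big(\widetilde{\zeta}_h(z), \zeta_h(z)\big) + d\big(\zeta_h(z), \zeta(z)\big).
\]
Here $\zeta_h(z) = \argmin_{y\in\Omega} \widetilde{L}_h(z,y)$ is the population local Fr\'echet regression target along the true index, with $\widetilde{L}_h(z,y) = \mathbb{E}[w(z,h)\, d^2(Y,y)]$ and population local-linear weights $w(z,h)$. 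The three terms are, in order, the errors-in-variables term, the stochastic term and the bias term.

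For the first term I would invoke Lemma~\ref{lm:dfr_results} directly. First I check that $|\widehat{z} - z| = O_p(b_n)$. For a fixed network $g(\cdot;\beta_\delta)$, Theorem~\ref{thm:uat_leaky} gives $\sup_{x\in\mathcal{D}} |g(x;\beta_\delta) - g_0(x)| < \delta$. Taking $\delta = \delta_n \asymp b_n$ covers both the query point and all training indices $\widehat{Z}_i$ uniformly. Independence of $x$ from the sample ensures the lemma applies pointwise at $\widehat{z}$. The curvature exponent in the lemma, $\psi_3$, is identified with $\gamma_3$ in Assumption~\ref{ass:frechet4}. This yields the $O_p\{(h^{-2}b_n)^{1/(\gamma_3-1)}\}$ term, under the bandwidth conditions $nh\to\infty$ and $h^{-3}b_n\to0$ inherited from the lemma.

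The remaining two terms are the standard LFR analysis with a scalar predictor $Z$, following \citet{petersen2019frechet}. Assumption~\ref{ass:density} gives a density of $Z$ bounded away from zero, so the local moments $\mu_j$ are nondegenerate.

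\textbf{Bias term.} A Taylor expansion of the conditional distribution of $Y$ given $Z$ gives $\sup_y |\widetilde{L}_h(z,y) - M(z,y)| = O(h^2)$. Combined with the curvature condition with exponent $\gamma_1$, namely $M(z,y) - M(z,\zeta(z)) \ge C d(y,\zeta(z))^{\gamma_1}$ near the minimizer, this gives $d(\zeta_h(z),\zeta(z)) = O(h^{2/(\gamma_1-1)})$.

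\textbf{Stochastic term.} I would apply M-estimation theory, as in van der Vaart--Wellner Theorem 3.2.5, to $\widetilde{M}_h(z,\cdot) - \widetilde{L}_h(z,\cdot)$:
\begin{enumerate}
\item Consistency follows from uniform convergence of the objective over $\Omega$, which uses the entropy condition.
\item A modulus-of-continuity bound gives
\[
\mathbb{E}\sup_{d(y,\zeta_h(z))<\eta} \big|(\widetilde{M}_h - \widetilde{L}_h)(z,y) - (\widetilde{M}_h - \widetilde{L}_h)(z,\zeta_h(z))\big| \lesssim \eta\,(nh)^{-1/2}.
\]
\item Combining this with the local curvature exponent $\gamma_2$ gives the rate $(nh)^{-1/(2\gamma_2-2)}$.
\end{enumerate}

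Summing the three bounds gives the claim.

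The main obstacle is the first step. The required bound is not merely $|\widehat{z}-z|$: Lemma~\ref{lm:dfr_results} needs the perturbation to hold uniformly over all training indices $\widehat{Z}_i - Z_i$ as well. One also has to make sure the choice $\delta_n\asymp b_n$ is compatible with the network being held fixed. I would argue it is, because Theorem~\ref{thm:uat_leaky} is a sup-norm statement over the compact $\mathcal{D}$, and I treat $\beta_\delta$ as deterministic given $\delta$, so that $b_n$ bounds every index perturbation simultaneously.
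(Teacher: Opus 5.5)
Your skeleton is the paper's: a triangle inequality, Lemma~\ref{lm:dfr_results} for the index-estimation error, and the \citet{petersen2019frechet} rate for local Fr\'echet regression at the true index. The paper simply cites that rate, while you split it into bias and stochastic parts.

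Your first term is handled more cleanly than in the paper. The paper routes through $\widetilde{\zeta}_h(\widehat{z})$ and asserts that both resulting pieces follow from the lemma. The lemma, however, bounds exactly $d\{\widehat{\zeta}_h(\widehat{z}),\widetilde{\zeta}_h(z)\}$, which is your first term. Your remark that the perturbation must be controlled at the training indices $\widehat{Z}_i$ as well as at $\widehat{z}$ is left implicit in the paper. One small slip: $\gamma_3$ is the curvature exponent of Assumption~\ref{ass:frechet3}, not of the entropy Assumption~\ref{ass:frechet4}.

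The bias step, as written, does not give the claimed exponent. From $\widetilde{L}_h\{z,\zeta_h(z)\}\le\widetilde{L}_h\{z,\zeta(z)\}$ and a sup bound $\sup_y|\widetilde{L}_h(z,y)-M(z,y)|=O(h^2)$ you only get
\[
C_1\, d\{\zeta_h(z),\zeta(z)\}^{\gamma_1}\le M\{z,\zeta_h(z)\}-M\{z,\zeta(z)\}\le 2\sup_y|\widetilde{L}_h(z,y)-M(z,y)|=O(h^2).
\]
That yields $d=O(h^{2/\gamma_1})$, which is strictly slower than $h^{2/(\gamma_1-1)}$; for $\gamma_1=2$ it is $h$ rather than $h^2$.

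What you need is an increment bound on $\Delta_h(y)=\widetilde{L}_h(z,y)-M(z,y)$, namely $|\Delta_h(y)-\Delta_h\{\zeta(z)\}|\le Ch^2\,d\{y,\zeta(z)\}$. It comes from two facts:
\begin{itemize}
\item The Taylor expansion behind your $O(h^2)$ claim, which uses the uniform bound on $\partial_z^2 f_{Z\mid Y}$ in Assumption~\ref{ass:frechet1}, actually gives $\Delta_h(y)=\int d^2(u,y)R_h(u)\,dF_Y(u)$ with $\sup_u|R_h(u)|=O(h^2)$.
\item For bounded $\Omega$, $|d^2(u,y)-d^2(u,y')|\le 2\,\mathrm{diam}(\Omega)\,d(y,y')$.
\end{itemize}
Then $C_1 d^{\gamma_1}\le Ch^2 d$ gives $d=O(h^{2/(\gamma_1-1)})$. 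This is the same increment-versus-sup distinction you already use correctly in the stochastic term, where the modulus is $\eta(nh)^{-1/2}$ rather than a sup bound.

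Finally, the curvature conditions in Assumption~\ref{ass:frechet3} are only local, within $\eta_1$ and $\eta_2$. So you must first establish $d\{\zeta_h(z),\zeta(z)\}\to0$ from the separation conditions in Assumption~\ref{ass:frechet2} before invoking them.
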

The first term captures the uncertainty caused by learning of single indexes through DNN and the induced error in the subsequent analysis, and the last two terms reflect the convergence of LFR. For the metric spaces discussed in example \ref{exm:spd} to \ref{exm:mea}, $\gamma_1=\gamma_2=\gamma_3=2$ and the convergence rate reduces to $O_p(h^{-2}b_n+h^2+(nh)^{-1/2})$.

\section{Simulations} \label{chap6}
We conduct simulation studies under three settings corresponding to distinct metric spaces introduced in Examples~\ref{exm:spd}--\ref{exm:mea}: the space of SPD matrices equipped with the log-Cholesky metric, the space of networks equipped with the Frobenius metric, and the space of univariate probability distributions equipped with the Wasserstein metric. Each setting is tested across sample sizes $n=200,500,1000$, with $200$ Monte Carlo replications per scenario. Across these settings, we compare the proposed DeSI method with existing approaches for Fr\'echet regression, including global Fr\'echet regression (GFR) \citep{petersen2019frechet}, single-index Fr\'echet regression (IFR) \citep{bhattacharjee2023single}, and deep Fr\'echet regression (DFR) \citep{iao2025deep}. DFR does not currently have an implementation for SPD matrix-valued outputs under the log-Cholesky metric and is therefore not included in the SPD setting.

For the $q$-th Monte Carlo replication, let $\widehat{m}_q$ denote the estimated regression function and $m$  the true regression function. The mean prediction error is computed using $100$ independent test samples as 
\begin{align*}
    \mathrm{MPE}_q = 100^{-1}\sum_{i=1}^{100} d\{\widehat{m}_{q}(X_i^{\text{test}}), m(X_i^{\text{test}})\}
\end{align*}
where $d(\cdot,\cdot)$ is the corresponding metric on the output space. 

Both IFR and DeSI provide estimators of the true index direction $\theta_0$. While IFR produces a single global estimate $\widehat{\theta}$, DeSI yields subject-specific estimates $\widehat{\theta}_i$, $i=1,\ldots,n$. Since $\theta_0$ lies on the unit sphere, we aggregate the DeSI estimates by computing the intrinsic sample mean of $\{\widehat{\theta}_i\}_{i=1}^n$ on the sphere, which is then taken as $\widehat{\theta}$. The accuracy of index estimation is quantified by the $L_2$ distance between $\theta_0$ and $\widehat{\theta}$.

\subsection{SPD Matrices} \label{chap61}
We consider a regression setting in which the output is an SPD matrix equipped with the log-Cholesky metric defined in Example~\ref{exm:spd} and the inputs are Euclidean vectors. For each input vector $X_i \in \mathbb{R}^4$, $i=1,\ldots,n$, we generate the corresponding SPD matrix as follows.

\paragraph{Input.} The inputs are $X_i = (X_{i1},X_{i2},X_{i3},X_{i4})^\top,$ $i = 1,\dots,n,$ where $X_{i1} \sim \text{Unif}(1,2), X_{i2},X_{i3} \sim \text{Unif}(0,1), X_{i4} \sim \text{Unif}(-1,0)$ independently across coordinates and subjects.  We fix
 \begin{align}
     \theta_{\text{raw}} = (0.1,\,0.5,\,0,\,-0.1)^\top,\qquad
     \theta_0 = \frac{\theta_{\text{raw}}}{\|\theta_{\text{raw}}\|},\label{eq:theta}
 \end{align}
and define the single index $Z_i$ for subject $i$ as $Z_i = \theta_0^\top X_i$.

\paragraph{Output.} Let $q=3$ be the matrix dimension. We construct a fixed orthonormal matrix $Q \in \mathbb{R}^{q \times q}$ using a discrete Legendre-type basis, which is shared across all simulated samples. For each $i$, the eigenvalues are deterministic functions of the single index $Z_i$,
    $\lambda_{i1} = Z_i,\,
    \lambda_{i2} = Z_i^2,\,
    \lambda_{i3} = e^{Z_i},$
and are assembled into the diagonal matrix $\Lambda(Z_i) = \mathrm{diag}(\lambda_{i1},\lambda_{i2},\lambda_{i3})$. The SPD matrices are then 
\begin{align*}
    Y_{i} = Q \Lambda(Z_i) Q^\top + E_{i}, \text{ } i= 1,\ldots,n,
\end{align*}
where $E_{i} = \mathrm{diag}(\varepsilon_{i1},\varepsilon_{i2},\varepsilon_{i3})$ and $\varepsilon_{i1},\varepsilon_{i2},\varepsilon_{i3} \overset{\text{i.i.d.}}{\sim} \text{Unif}(-10^{-3},10^{-3})$.

\subsection{Networks} \label{chap62}
We consider a network-valued regression setting in which the output is a weighted network, represented by its graph Laplacian and equipped with the Frobenius metric.

\paragraph{Input:} $\theta_0$ is the same as \eqref{eq:theta} and the input vectors are $X_i = (X_{i1},X_{i2},X_{i3},X_{i4})^\top,$ $i = 1,\dots,n,$ where $X_{ij} \stackrel{\mathrm{i.i.d.}}{\sim} \mathrm{Unif}(0,1)$, and the single index $Z_i$ for subject $i$ is defined as $Z_i=\theta_0^\top X_i$.

\paragraph{Output:} We first generate a symmetric binary adjacency matrix
$\mathbf{A} \in \{0,1\}^{q \times q}$ with $q=10$, where for $k<\ell$,
$A_{k\ell} = A_{\ell k} \stackrel{\mathrm{i.i.d.}}{\sim} \mathrm{Bernoulli}(0.3)$ and $A_{kk}=0$. Given $\mathbf{A}$, we define a symmetric weighted adjacency matrix $\mathbf{E}_i = (E_{i,k\ell})$ by
\begin{align*}
    E_{i,k\ell}
    =
    \begin{cases}
    \displaystyle
    \sin\!\left( \frac{(k+\ell)\pi}{2q} \right)
    \frac{2 + Z_i^2}{|Z_i|+1}
    + \varepsilon_{i,k\ell},
    & k<\ell,\\
    E_{i,\ell k}, & k>\ell, \\
    0, & k=\ell,
    \end{cases}
\end{align*}
where $\varepsilon_{i,k\ell} \sim \mathrm{Unif}(-0.02,0.02)$ are independent noise terms and $E_{i,k\ell}=0$ when $A_{k,\ell} = 0$. Let $D_i$ denote the degree matrix associated with $\mathbf{E}_i$, defined as $D_i = \mathrm{diag}(\mathbf{E}_i \mathbf{1}_q)$. The network response is represented by its graph Laplacian $Y_i = D_i - \mathbf{E}_i$.

\subsection{Univariate Probability Distributions} \label{chap63}
\paragraph{Input:} The true index direction $\theta_0$ is set as in \eqref{eq:theta}. The predictors are $X_i = (X_{i1},X_{i2},X_{i3},X_{i4})^\top,$ $i = 1,\dots,n$. For each $i$, we first generate a Gaussian vector $U_i = (U_{i1}, U_{i2}, U_{i3}, U_{i4})^\top \sim \mathcal{N}(0,\Sigma)$, where $\Sigma_{jj}=1$ and $\Sigma_{jk}=\rho$ for $j\neq k$, with $\rho=0.25$. Each component is then transformed as $X_{ij} = 2\Phi(U_{ij}) - 1$, where $\Phi(\cdot)$ denotes the standard normal cumulative distribution function.

\paragraph{Output:} We consider three transformation functions for the single index $Z_i$:
\begin{align*}
\psi(Z_i)=
\begin{cases}
    Z_i, & \text{(linear)},\\
    Z_i^2, & \text{(quadratic)},\\
    \exp(Z_i), & \text{(exponential)}.
\end{cases}  
\end{align*}
Conditional on $Z_i$, the response distribution is generated as a Gaussian $\mathcal{N}(\mu_i, \sigma_i^2)$, where
\begin{align*}
    &\mu_i = \psi(Z_i) + \xi_i, \,  \xi_i \sim \mathcal{N}(0, 0.25^2),\\
    &\sigma_i \sim \mathrm{Exp}\!\left(\frac{1}{\eta_i}\right),
    \, \eta_i = \frac{\exp(Z_i)}{1 + \exp(Z_i)}.
\end{align*}
Each distributional output is represented by its quantile function on a fixed grid, with the true quantile function given by $Q_i(p) = \mu_i + \sigma_i\Phi^{-1}(p)$ for $p \in (0,1)$, corresponding to a shift-scale family. 



The MPE, averaged over 200 Monte Carlo replications along with the corresponding standard deviations, is reported in Table~\ref{tab:MSPE} for all three simulation settings. Across different metric spaces and sample sizes, DeSI consistently attains the lowest or among the lowest prediction errors. Moreover, its performance advantage becomes increasingly pronounced as the sample size grows.

\begin{table*}[tb]
\centering
\caption{Mean prediction error, reported as mean (standard deviation) over 200 Monte Carlo replications, across different metric spaces and sample sizes. Methods compared include deep single-index \F regression (DeSI; proposed), global \F regression (GFR; \citealp{petersen2019frechet}), single-index \F regression (IFR; \citealp{bhattacharjee2023single}), and deep \F regression (DFR; \citealp{iao2025deep}). Boldface indicates the best performance within each setting.}
\label{tab:MSPE}
\small
\setlength{\tabcolsep}{5pt}
\begin{tabular}{llccccc}
\toprule
Method & $n$ 
& SPD 
& Networks 
& Dist. (Lin.) 
& Dist. (Quad.) 
& Dist. (Exp.) \\
\midrule

\multirow{3}{*}{DeSI}
& 200  & \textbf{0.0095 (0.0101)} & 0.2214 (0.2838) & 0.1582 (0.0213) & \textbf{0.2031 (0.0668)} & \textbf{0.1377 (0.0131)} \\
& 500  & \textbf{0.0067 (0.0078)} & \textbf{0.0790 (0.1274)} & \textbf{0.0461 (0.0143)} & \textbf{0.1570 (0.0564)} & \textbf{0.0985 (0.0203)} \\
& 1000 & \textbf{0.0057 (0.0054)} & \textbf{0.0531 (0.0428)} & \textbf{0.0344 (0.0121)} & \textbf{0.0892 (0.0431)} & \textbf{0.0690 (0.0115)} \\
\midrule

\multirow{3}{*}{DFR}
& 200  & --- & \textbf{0.1389 (0.0317)} & 0.2441 (0.0353) & 0.2420 (0.0320) & 0.2455 (0.0335) \\
& 500  & --- & 0.0840 (0.0176) & 0.1857 (0.0188) & 0.1890 (0.0174) & 0.1868 (0.0193) \\
& 1000 & --- & 0.0591 (0.0107) & 0.1461 (0.0127) & 0.1475 (0.0118) & 0.1469 (0.0127) \\
\midrule

\multirow{3}{*}{GFR}
& 200  & 0.0153 (0.0017) & 0.5705 (0.1061) & \textbf{0.0744 (0.0195)} & 0.2662 (0.0191) & 0.1623 (0.0165) \\
& 500  & 0.0153 (0.0015) & 0.5696 (0.1021) & 0.0486 (0.0129) & 0.2534 (0.0163) & 0.1428 (0.0118) \\
& 1000 & 0.0153 (0.0014) & 0.5663 (0.0977) & 0.0351 (0.0094) & 0.2498 (0.0157) & 0.1418 (0.0095) \\
\midrule

\multirow{3}{*}{IFR}
& 200  &  0.0128 (0.0045) & 0.5598 (0.1821) & 0.2200 (0.0880) & 0.2628 (0.0430) & 0.2705 (0.1059) \\
& 500  & 0.0127 (0.0046) & 0.5769 (0.2144) & 0.2053 (0.0805) & 0.2605 (0.0369) & 0.2605 (0.1120) \\
& 1000 & 0.0101 (0.0038) & 0.4927 (0.1825) & 0.1644 (0.0571) & 0.2272 (0.0612) & 0.1890 (0.0676) \\
\bottomrule
\end{tabular}
\end{table*}

Table~\ref{tab:MSPE_theta} summarizes the accuracy of estimating the single-index direction. Across all metric spaces and experimental settings, DeSI yields uniformly smaller estimation error than IFR, with more pronounced improvements in the network and nonlinear distributional settings. These results demonstrate the effectiveness of DeSI in both predictive performance and recovery of the underlying index structure. Additional boxplots stratified by sample size are provided in Appendix~\ref{app:additionplots}.

\begin{table*}[tb]
\centering
\caption{Prediction error for estimating the single-index direction $\theta_0$, reported as mean (standard deviation) over 200 Monte Carlo replications. Results compare deep single-index \F regression (DeSI) with single-index \F regression (IFR; \citealp{bhattacharjee2023single}) across different metric spaces and sample sizes. Boldface indicates the smaller estimation error.}
\label{tab:MSPE_theta}
\small
\setlength{\tabcolsep}{5pt}
\begin{tabular}{llccccc}
\toprule
Method & $n$ 
& SPD 
& Networks 
& Dist. (Lin.) 
& Dist. (Quad.) 
& Dist. (Exp.) \\
\midrule

\multirow{3}{*}{DeSI}
& 200  & \textbf{0.2170 (0.2640)} & \textbf{0.1387 (0.3582)} & \textbf{0.0213 (0.0001)} & \textbf{0.0262 (0.0020)} & \textbf{0.0213 (0.0001)} \\
& 500  & \textbf{0.1210 (0.2631)} & \textbf{0.0297 (0.1722)} & \textbf{0.0208 (0.0000)} & \textbf{0.0255 (0.0028)} & \textbf{0.0207 (0.0000)} \\
& 1000 & \textbf{0.0640 (0.0181)} & \textbf{0.0023 (0.0143)} & \textbf{0.0206 (0.0000)} & \textbf{0.0220 (0.0014)} & \textbf{0.0164 (0.0000)} \\
\midrule

\multirow{3}{*}{IFR}
& 200  & 0.3101 (0.1612) & 0.2353 (0.2171) & 0.0783 (0.0058) & 0.2767 (0.0113) & 0.1037 (0.0083) \\
& 500  & 0.3222 (0.1663) & 0.2882 (0.3145) & 0.0703 (0.0053) & 0.2740 (0.0104) & 0.1101 (0.0093) \\
& 1000 & 0.2150 (0.1001) & 0.1744 (0.1683) & 0.0451 (0.0015) & 0.2211 (0.0123) & 0.0474 (0.0024) \\
\bottomrule
\end{tabular}
\end{table*}

\section{Data Application} \label{chap7}
We illustrate the proposed Deep Single-Index Fr\'echet Regression (DeSI) method using compositional mood data from the Survey of Unemployed Workers in New Jersey \citep{krueger2011job}, collected between 2009 and 2010. The study is based on a stratified random sample of unemployed individuals at the time of enrollment. After excluding incomplete records, the analysis includes $n = 2284$ participants with fully observed measurements.

The primary outcome of interest is the distribution of time spent at home across four affective states: bad, low or irritable, mildly pleasant, and very good. For each individual, the mood outcome is recorded as a compositional vector $W = (W_1, W_2, W_3, W_4)^\top$, where $W_j$ denotes the proportion of time spent in the $j$th mood category. As discussed in Example~\ref{exm:com}, we apply a component-wise square-root transformation, $$Y = (\sqrt{W_1}, \sqrt{W_2}, \sqrt{W_3}, \sqrt{W_4}),$$ which maps the compositional data to the positive orthant of the unit sphere $\mathbb{S}^3$ equipped with the geodesic metric $d_{g}$.

The predictors consist of a baseline covariate vector $X = (X_1, \ldots, X_{10})$, constructed from questionnaire data and capturing social, demographic, economic, and labor market characteristics. These covariates include self-reported life satisfaction, educational attainment, marital status, number of children, household size, total household income, job separation type, duration of job search, and measures of financial resources such as savings and credit card debt.

We apply DeSI, global \F regression (GFR) \citep{petersen2019frechet}, single-index \F regression (IFR) \citep{bhattacharjee2023single} to the data. DFR is is not included, as the current implementation does not support compositional outcomes under the geodesic metric; extending DFR to this setting would require metric-specific modifications. Model performance is assessed using 100 Monte Carlo replications, each consisting of a 10-fold cross-validation procedure. Table~\ref{tab:MSPE_application} reports the MPE, averaged over the Monte Carlo replications, together with the corresponding standard deviations for each method. DeSI achieves the lowest prediction error among the three methods, yielding an MPE reduction of approximately $6\%$ relative to IFR and $34\%$ relative to GFR. These results indicate that the proposed deep single-index structure effectively captures nonlinear covariate effects while retaining the local adaptivity of \F regression.

\begin{table}[tb]
\centering
\caption{MPE for the New Jersey compositional mood data application. Results are reported as mean (standard deviation) over 100 Monte Carlo replications, comparing deep single-index \F regression (DeSI), single-index \F regression (IFR; \citealp{bhattacharjee2023single}), and global \F regression (GFR; \citealp{petersen2019frechet}).}
\label{tab:MSPE_application}
\small
\begin{tabular}{lccc}
\toprule
 & DeSI & IFR & GFR \\
\midrule
MPE & $\textbf{0.4658 (0.0012)}$ & 0.4947 (0.0101) & 0.6229 (0.0044) \\
\bottomrule
\end{tabular}
\end{table}

Beyond predictive performance, DeSI also provides an interpretable summary of the index direction. The aggregated direction estimator is
\begin{align*}
\widehat{\theta} =(0.9915,\,0.0297,\,-0.039,\,-0.023,\,0.0374,\,\\ -0.0309,\,0.0316,\,0.007,\,0.094,\,0.0427)^\top.
\end{align*}
The first component, associated with self-reported life satisfaction, is by far the largest in magnitude. This indicates that life satisfaction is the primary driver of variation in mood composition. In contrast, the remaining variables have much smaller coefficients, suggesting they play more minor roles in comparison. To further illustrate the implications of the learned index, Figure~\ref{fig:life_satisfaction} shows the distribution of mood compositions across increasing levels of life satisfaction. A higher level of life satisfaction is associated with a marked decrease in negative moods, particularly bad mood and with a corresponding increase in mildly pleasant and very good moods. This pattern aligns closely with the dominant index component identified by DeSI and provides empirical support for both the predictive effectiveness and interpretability of the proposed approach.

\begin{figure}[tb]
  \centering
  \includegraphics[width=\linewidth]{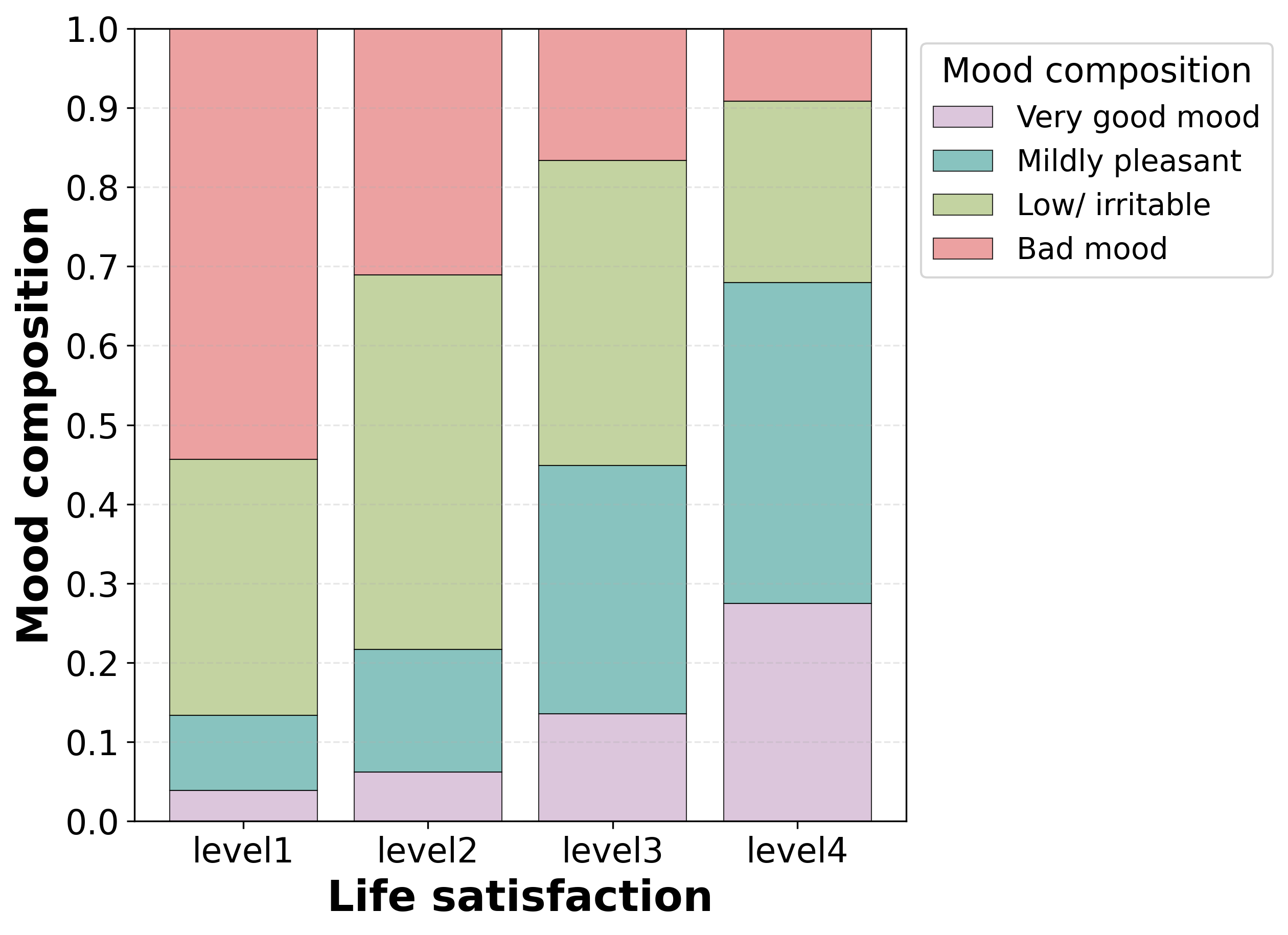}
  \caption{Mood composition across different life satisfaction levels.}
  \label{fig:life_satisfaction}
\end{figure}

\section{Discussion}
The proposed framework provides interpretability through a single-index structure. By reducing the dependence of the regression function on high-dimensional predictors to a one-dimensional projection, the model yields a direction whose components quantify the relative importance of predictors. In contrast to standard DNNs, which typically operate as black-box models, the proposed approach combines the flexibility of deep learning with the interpretability of single-index models. This is particularly valuable in regression scenarios with metric space-valued outputs, where standard linear interpretations are not available due to the lack of algebraic structure in the response space.

Although the proposed model is based on the single-index assumption, it remains effective beyond this setting. In scenarios where the true relationship does not strictly follow a single-index structure, the learned projection can still serve as a meaningful low-dimensional representation of the predictors when aiming at the prediction of random objects.  This suggests that the method is anticipated to be robust to a certain level of  model misspecification, which is indeed supported by the empirical results in Appendix \ref{app:robust}. In such cases, the estimated index should be interpreted as an optimal predictive projection rather than a recovery of a true underlying direction.

The single-index framework can be extended to settings where a single index alone is insufficient to capture the full complexity of the relationship between predictors and the response. A natural extension is a multi-index model, in which multiple projections are learned and local \F regression is performed in a higher-dimensional index space. While this would increase modeling flexibility and potentially improve predictive performance, it also reduces interpretability, as multiple directions must be interpreted jointly. Understanding how to balance flexibility and interpretability in such extensions is an important direction for future work.

\section*{Impact Statement} 
This work advances machine learning and statistics by developing regression methodology for complex, non-Euclidean outputs. The proposed framework is general-purpose and applicable to scientific and engineering problems involving structured data such as distributions and networks. We do not anticipate any direct negative societal or ethical impacts; as with other general modeling tools, downstream effects depend on the application context, and responsible use in accordance with domain-specific ethical standards is encouraged.

\bibliography{DeSI}
\bibliographystyle{icml2026}

\newpage
\appendix
\onecolumn
\renewcommand{\theassumption}{A\arabic{assumption}}
\setcounter{assumption}{0}
\section{\F Mean, Conditional \F Mean and Local \F Regression}\label{app:lfr}

\subsection{\F Mean and Conditional \F Mean}
In classical statistics, the population mean can be defined as the optimal point estimator under a squared error loss. Specifically, let $Y$ be a real-valued random variable in $\mathbb{R}$. The expectation of $Y$ is uniquely characterized as the minimizer of the expected squared deviation:
\begin{equation*}
    \mathbb{E}[Y] = \arg\min_{y \in \mathbb{R}} \mathbb{E}\left[(Y - y)^2\right].
\end{equation*}
This identity is derived by expanding the objective function, $\mathbb{E}[(Y - y)^2] = \mathbb{E}[Y^2] - 2y \mathbb{E}[Y] + y^2$, and observing that the first-order optimality condition 
\begin{equation*}
    \frac{d}{dy} \mathbb{E}[(Y - y)^2] = -2\mathbb{E}[Y] + 2y = 0
\end{equation*}
yields $y = \mathbb{E}[Y]$. For a random pair $(X, Y) \in \mathbb{R}^p \times \mathbb{R}$, the conditional expectation $\mathbb{E}[Y \mid X = x]$ admits a parallel characterization:
\begin{equation*}
    \mathbb{E}[Y \mid X = x] = \arg\min_{y \in \mathbb{R}} \mathbb{E}\left[(Y - y)^2 \mid X = x\right].
\end{equation*}

The mean and conditional mean could be naturally extended to random objects taking values in a general metric space $(\Omega, d)$. For a random variable $Y$ taking values in $\Omega$, the \F mean is defined as the minimizer of the expected squared distance \citep{frechet1948elements}:
\begin{equation*}
    \mathbb{E}_\oplus[Y] = \arg\min_{y \in \Omega} \mathbb{E}\left[d^2(y, Y)\right].
\end{equation*}
Similarly, for a random pair $(X, Y) \in \mathbb{R}^p \times \Omega$, the conditional \F mean is given by:
\begin{equation*}
    \mathbb{E}_\oplus[Y \mid X = x] = \arg\min_{y \in \Omega} \mathbb{E}\left[d^2(y, Y) \mid X = x\right].
\end{equation*}
These definitions simplify to the classical mean and conditional means when $\Omega = \mathbb{R}$ and $d(y, Y) = |y - Y|$.





\subsection{Local \F Regression}\label{app:mean}
For a scalar output $Y \in \mathbb{R}$ and an input $Z \in \mathbb{R}$, we consider the estimation of the conditional mean function $m(z) = \mathbb{E}[Y \mid Z=z]$. Local linear regression approximates $m(z)$ by solving a kernel-weighted least squares problem, which provides a first-order Taylor approximation of the regression function in the neighborhood of $z$ \citep{fan1996local}:
\begin{equation*}
(\beta_0, \beta_1) = \arg\min_{\beta_0, \beta_1 \in \mathbb{R}} \mathbb{E} \left[ K_h(Z - z) \left( Y - \beta_0 - \beta_1(Z - z) \right)^2 \right],
\end{equation*}
where $K_h(\cdot) = h^{-1}K(\cdot/h)$ denotes a kernel function with bandwidth $h > 0$. 

The coefficients $(\beta_0, \beta_1)$ admit closed-form expressions determined by the weighted moments of the covariate and output. Let $\mu_k = \mathbb{E}[K_h(Z - z)(Z - z)^k]$ and $\xi_k = \mathbb{E}[K_h(Z - z)(Z - z)^k Y]$. The solution for the intercept $\beta_0$, which serves as our estimator $\widetilde{m}(z)$, is given by:
\begin{equation*}
\widetilde{m}(z) = \beta_0 = \frac{\mu_2 \xi_0 - \mu_1 \xi_1}{\mu_0 \mu_2 - \mu_1^2}.
\end{equation*}
By linearity of the expectation, the local linear regression estimator can be represented as a weighted mean of the random output: $\widetilde{m}(z) = \mathbb{E}[w(z; Z, h) Y]$, where the equivalent kernel weight function is defined as
\begin{equation}
\label{eq:llr_weight}
    w(z; Z, h) = \frac{K_h(Z - z) \left( \mu_2 - \mu_1 (Z - z) \right)}{\mu_0 \mu_2 - \mu_1^2}.
\end{equation}
Analogous to the characterization of the mean in Appendix \ref{app:lfr}, the local linear estimator $\widetilde{m}(z)$ can be interpreted as a weighted mean in Euclidean space. Specifically, it solves the following localized risk minimization problem:
\begin{equation*}
    \widetilde{m}(z) = \arg\min_{y \in \mathbb{R}} \mathbb{E} \left[ w(z; Z, h) (Y - y)^2 \right].
\end{equation*}
This formulation highlights that local linear regression is effectively a local weighted mean problem, where the weight function $w(z; Z, h)$ adaptively accounts for the local geometry and density of the covariate distribution.

The variational perspective above extends naturally to non-Euclidean settings. Suppose the output $Y$ resides in a general metric space $(\Omega, d)$, while the input $Z \in \mathbb{R}$ remains a scalar. The local \F regression function $\widetilde{m}(z)$ is defined as:
\begin{align}
    \widetilde{m}(z) = \argmin_{y \in \Omega} \mathbb{E}\Bigl[ w(z; Z, h) d^2(Y,y) \Bigr],
    \label{eq:lfr_appendix}
\end{align}
where the weight function $w$ is defined as in Equation \eqref{eq:llr_weight}. This characterization demonstrates that local \F regression generalizes local linear regression by replacing the Euclidean output $Y \in \mathbb{R}$ and the squared error loss with a metric space-valued output $Y \in \Omega$ and the squared metric distance $d^2(Y, y)$.

\section{Optimization Details} \label{app:optimization}
\subsection{SPD Matrices}
For symmetric positive-definite (SPD) matrices, we employ the log-Cholesky metric in the local \F regression step. Let $Y_i \in \Omega$ admit the Cholesky decomposition $Y_i = L_i L_i^\top$, where $L_i$ is lower triangular with positive diagonal entries. Define the log-Cholesky mapping $F_{LC}: \Omega \to \mathbb{R}^{q(q+1)/2}$ by
\begin{align*}
    F_{LC}(Y_i) = \bigl( \ell_{jk}(Y_i) \bigr)_{1\le j,k\le q}, \qquad
    \ell_{jk}(Y_i) =
    \begin{cases}
        L_{jk},      & \text{if } j > k, \\
        \log L_{jj}, & \text{if } j = k, \\
        0,           & \text{if } j < k.
    \end{cases}
\end{align*}
This map $F_{LC}$ is a global isometry from $(\Omega, d_{\mathrm{LC}})$ onto its image in the Euclidean space $\mathbb{R}^{q(q+1)/2}$ equipped with the Frobenius norm $\|\cdot\|_F$, i.e.,
\[
d_{\mathrm{LC}}(Y_1,Y_2) = \bigl\| F_{LC}(Y_1) - F_{LC}(Y_2) \bigr\|_F \quad \forall\, Y_1,Y_2 \in \Omega.
\]
Given weights $w_i$ with $\sum_{i=1}^n w_i > 0$, the weighted local \F mean is defined as
\begin{align*}
    \widehat{Y}
    &= \argmin_{y \in \Omega} \frac{1}{n} \sum_{i=1}^n w_i \, d_{\mathrm{LC}}^2(Y_i,y) \\
    &= \argmin_{y \in \Omega} \frac{1}{n} \sum_{i=1}^n w_i \, \bigl\| F_{LC}(Y_i) - F_{LC}(y) \bigr\|_F^2.
\end{align*}
Since $F_{LC}$ is an isometry, minimizing the Fr\'echet objective under $d_{\mathrm{LC}}$ is equivalent to minimizing the weighted squared Euclidean distance in the vector space $\mathbb{R}^{q(q+1)/2}$. Let
\[
F^{\mu}_{LC} := \Bigl( \sum_{i=1}^n w_i \Bigr)^{-1} \sum_{i=1}^n w_i \, F_{LC}(Y_i)
\]
be the weighted Euclidean mean of the vectors $F_{LC}(Y_i)$. Then the unique minimizer of the quadratic objective
\[
y \mapsto \sum_{i=1}^n w_i \, \bigl\| F_{LC}(Y_i) - F_{LC}(y) \bigr\|_F^2
\]
is attained at the point $y^\ast$ satisfying $F_{LC}(y^\ast) = F^{\mu}_{LC}$. Because $F_{LC}$ is bijective (every strictly lower-triangular matrix with positive diagonal entries corresponds to a unique SPD matrix via exponentiation on the diagonal and reconstruction), there exists a unique $y^\ast \in \Omega$ such that
\[
F_{LC}(y^\ast) = F^{\mu}_{LC}.
\]
Explicitly, if we write $F^{\mu}_{LC} = (\bar{\ell}_{jk})_{1\le j,k\le q}$ with $\bar{\ell}_{jk} = 0$ for $j < k$, where the average is the element-wise mean for $n$ samples, then the corresponding Cholesky factor $L^\ast$ is given by
\[
L^\ast_{jk} =
\begin{cases}
    \bar{\ell}_{jk},      & j > k, \\
    \exp(\bar{\ell}_{jj}), & j = k, \\
    0,                    & j < k,
\end{cases}
\]
and the unique Fr\'echet mean is
\[
\widehat{Y} = L^\ast (L^\ast)^\top.
\]
Thus the weighted \F mean under the log-Cholesky metric exists, is unique, and admits the closed-form expression above.

\subsection{Networks}

For networks, we represent each network by its graph Laplacian matrix. Let $\Omega$ denote the set of valid graph Laplacian matrices (i.e., symmetric positive semi-definite matrices with nonpositive off-diagonal entries and row sums equal to zero) corresponding to the networks under consideration. For each observed network $Y_i \in \Omega$, let $L_i$ be its associated graph Laplacian matrix. We equip $\Omega$ with the Frobenius metric
\[
d_F(Y_1, Y_2) = \|Y_1 - Y_2\|_F = \Bigl( \sum_{j,k=1}^q (Y_1)_{jk} - (Y_2)_{jk} \Bigr)^2 \Bigr)^{1/2}.
\]
It is well known that the map from a network (weighted adjacency matrix or unweighted adjacency structure) to its graph Laplacian is one-to-one \citep{zhou2022network}. In other words, different networks produce distinct graph Laplacians, and every valid graph Laplacian corresponds to exactly one underlying network.

Given weights $w_i$ with $\sum_{i=1}^n w_i > 0$, the weighted local \F mean at a query point $t^\ast$ is defined as
\begin{align*}
    \widehat{Y}(t^\ast)
    &= \argmin_{y \in \Omega} \frac{1}{n} \sum_{i=1}^n w_i \, d_F^2(Y_i, y) \\
    &= \argmin_{y \in \Omega} \frac{1}{n} \sum_{i=1}^n w_i \, \|L_i - y\|_F^2,
\end{align*}
where $L_i$ is the graph Laplacian of the $i$-th observed network $Y_i$. Although the objective
\[
y \mapsto \sum_{i=1}^n w_i \|L_i-y\|_F^2
\]
has the same algebraic form as in the SPD matrix setting, the analogous weighted average
\[
\tilde y
=
\left(\sum_{i=1}^n w_i\right)^{-1}
\sum_{i=1}^n w_i L_i
\]
does not necessarily belong to the graph Laplacian space $\Omega$, since the weights $w_i$ may be negative. We therefore approximate the optimizer by projecting $\tilde y$ onto $\Omega$. Specifically, we define
\[
\hat y
=
\Pi_{\Omega}(\tilde y)
=
\argmin_{y\in\Omega}
\|\tilde y-y\|_F^2 .
\]
The resulting matrix $\hat y$ is used as the optimizer for the object function.

\subsection{Compositional Data}
For compositional data, as the optimization process does not have a closed form solution, we applied the gradient descent method on sphere with a fixed step-size \citep{amari1998natural}. This algorithm is differentiable with respect to the parameters of the DNN and the minimizer is also unique \citep{petersen2019frechet}.

\subsection{Univariate Probability Distributions}
For univariate probability distributions, we adopt Wasserstein regression that can accurately predict distributional outputs even when the sample sizes differ across distributions, including cases where some distributions have very small sample sizes while others have large ones \citep{zhou2024wasserstein}. Wasserstein regression solves a constrained convex quadratic optimization problem with linear constraints using empirical measures and returns a discretized version of the unique predicted quantile function. Such constrained convex quadratic optimization problem is mathematically expressed as
\begin{alignat*}{3}
& \underset{x}{\operatorname{minimize}}
  &\qquad& \frac{1}{2} x^\top H x + c^\top x \\
& \text{subject to}
  &\qquad& Ax = b_0, \\
  &&& Gx \leq b_1,
\end{alignat*}
where the matrix $H$ is positive semi-definite, thereby guaranteeing that the objective function is convex and that the problem belongs to the class of convex optimization problems.

The optimization step is implemented by the \texttt{CVXPYlayers} to embed differentiable optimization problems as layers within deep
learning architectures \citep{agrawal2019differentiable}.

\section{Assumptions for Local \F Regression}\label{app:ass}
We make the following assumptions to establish the asymptotic convergence rate of the proposed estimator.

\begin{assumption}[Kernel conditions]\label{ass:kernel}
The univariate kernel function $K(\cdot)$ satisfies
\begin{align*}
\int_{\mathbb{R}} K(u)\, u^{4}\, du < \infty
\text{ } \text{and} \text{ }
\int_{\mathbb{R}} K^{2}(u)\, u^{6}\, du < \infty.
\end{align*}
Moreover, $K(\cdot)$ is Lipschitz continuous and has a compact support.
\end{assumption}

\begin{assumption}[Smoothness of densities]\label{ass:frechet1}
The marginal density $f_Z(\cdot)$ of $Z$ and the conditional density
$f_{Z\mid Y}(\cdot,y)$ of $Z$ given $Y=y$ exist and are twice continuously differentiable,
with the latter holding uniformly for all $y\in\Omega$. Moreover,
\begin{align*}
\sup_{z,y}
\left|
\frac{\partial^2}{\partial z^2} f_{Z\mid Y}(z,y)
\right|
< \infty .
\end{align*}
In addition, for any open set $A\subset\Omega$, the mapping
\begin{align*}
z \;\mapsto\; \int_A dF_{Y\mid Z}(z,y)
\end{align*}
is continuous.
\end{assumption}

\begin{assumption}[Existence, uniqueness, and separation]\label{ass:frechet2}
The minimizers $\zeta(z)$, $\widetilde{\zeta}(z)$, $\widetilde{\zeta}(\widehat{z})$, and $\widehat{\zeta}(\widehat{z})$ of the local \F regression criteria exist and are unique.
\begin{align*}
    &\inf_{d\{\zeta(z),y\}>\epsilon} \bigl[ M(z,y) - M(z, \zeta(z)) \bigr] > 0,\\
    & \liminf_{h\to 0}
    \inf_{d\{\zeta(z),y\}>\epsilon}
    \bigl[ M_h(z,y) - M_h\{z, \zeta(z)\} \bigr] > 0,\\
    & P\!\left(\inf_{d\{\widetilde{\zeta}_h(z),y\}>\epsilon}\bigl[ \widetilde{M}_h(y,z) - \widetilde{M}_h\{\widetilde{\zeta}_h(z),y\} \bigr] > 0 \right) \to 1,\\
    &P\!\left(\inf_{d\{\widehat{\zeta}_h(z),y\}>\epsilon} \bigl[ \widehat{M}_h(z,y) - \widehat{M}_h\{\widehat{\zeta}_h(z),y\} \bigr] > 0 \right) \to 1 .
\end{align*}
\end{assumption}

\begin{assumption}[Local curvature conditions]\label{ass:frechet3}
There exist constants $\eta_1,\eta_2,\eta_3>0$, $C_1,C_2,C_3>0$, and $\gamma_1,\gamma_2,\gamma_3>1$
such that 
\begin{align*}
\inf_{d\{\zeta(z),y\}<\eta_1}
\Bigl[
M(z,y)-&M\{z,\zeta(z)\}-C_1 d\{\zeta(z),y\}^{\gamma_1}
\Bigr]\ge 0,\\
\liminf_{h\to 0}
\inf_{d\{\widetilde{\zeta}_h(z),y\}<\eta_2}
\Bigl[
&\, M_h(z,y)-M_h\{z,\widetilde{\zeta}_h(z)\} - C_2 d\{\widetilde{\zeta}_h(z),y\}^{\gamma_2}
\Bigr] \ge 0, \\
P\Bigl(
\inf_{d\{\widehat{\zeta}_h(z),\,y\}<\eta_3}
\bigl[
\widetilde{M}_h(z,y)&
- \widetilde{M}_h\{z, \widehat{\zeta}_h(z)\}
- C_3\, d\{\widehat{\zeta}_h(z),y\}^{\gamma_3}
\bigr]
\ge 0
\Bigr)
\;\to\; 1 .
\end{align*}
\end{assumption}

\begin{assumption}[Entropy condition]\label{ass:frechet4}
Let $B_\phi(y)\subset\Omega$ denote the ball of radius $\phi$ centered at $y$, and let $N\{\epsilon,B_\phi(y),d\}$ be its covering number with respect to balls of radius
$\epsilon$ under metric $d$. Then for any $y\in\Omega$,
\begin{align*}
   \int_0^1
    \bigl[1+\log N\{\phi\epsilon,B_\phi(y),d\}\bigr]^{1/2}
    \, d\epsilon
    = O(1)
    \text{ } \text{as } \phi \to 0 . 
\end{align*}
\end{assumption}
Assumptions \ref{ass:kernel} to \ref{ass:frechet3} are required to guarantee the distance between $\widehat{\zeta}_h(\widehat{z})$ and $\widetilde{\zeta}_h(z)$ not too large \citep{iao2025deep}. Assumption \ref{ass:kernel} is the common condition on kernel in local \F regression. Assumptions \ref{ass:frechet1} to \ref{ass:frechet3} are required to guarantee the distance between $\widehat{\zeta}_h(\widehat{z})$ and $\widetilde{\zeta}_h(z)$ not too large \citep{iao2025deep}, which impose standard regularity conditions to ensure the well-posedness and asymptotic validity of local \F regression. Assumption \ref{ass:frechet1} requires smoothness and boundedness of the joint and conditional densities of the projected covariate and output, which guarantees regular local behavior and justifies kernel-based approximations. Assumption \ref{ass:frechet2} ensures existence, uniqueness, and uniform separation of the \F regression minimizers for both population-level and sample-based objective functions, providing identifiability and consistency. Assumption \ref{ass:frechet3} imposes local curvature conditions around the \F regression targets, ensuring sufficient growth of the objective function away from its minimizer and allowing for rate derivations and asymptotic expansions. Assumption \ref{ass:frechet4} needs to be verified on a case-by-case basis and for distributional responses requires a space of finite dimensionality, such as a shift-scale family. For distributional responses, this assumption can be bypassed by pertaining to quantile function representations, which form a convex subset of the Hilbert space $L^2$ and applying results of \citet{pete:19}.

\section{Proofs}\label{app:proof}
In the following theorem, we first prove that minimizing training loss is equivalent to minimizing the distance between $g(X;\beta)$ and $g_0(X).$ Then our training procedure is actually minimizing distance between index functions $g$ and $g_0$ and in the meantime, minimizing the distance between $\widehat \theta$ and $\theta_0$. According to Theorem 1 of \citet{yarotsky2017error}, as long as the activation function is continuous and piecewise linear and with an affine output layer, deep neural networks achieve essentially the same approximation guarantees as standard ReLU networks, up to universal constants. The universal approximation of the single index parameter $\theta_0$ is established as follows. 
\begin{proof}[Proof of Theorem \ref{thm:uat_leaky}]
We first show that the activation function and $g_0$ have the following properties:
\begin{itemize}
    \item Continuity: At the breakpoint $u=0$, we have\[
\lim_{u \to 0^-} \sigma(u) = \lim_{u \to 0^-} \alpha u = 0, \qquad
\lim_{u \to 0^+} \sigma(u) = \lim_{u \to 0^+} u = 0, \qquad
\sigma(0) = 0.
\]
Thus $\sigma$ is continuous at $u=0$. On the open intervals $(-\infty,0)$ and $(0,\infty)$ it is continuous. Hence $\sigma$ is continuous on $\mathbb{R}$.
    \item Piecewise linearity: The function consists of exactly two affine pieces with a single breakpoint at $u=0$. On $(-\infty,0)$ it is linear with slope $\alpha$, and on $[0,\infty)$ it is linear with slope $1$. Therefore $\sigma$ is piecewise linear.
    \item $g_0\in \mathcal{W}^{n,\infty}([0,1]^d)$ for all integers $n\geq1$: recall that the Sobolev space $W^{n,\infty}(\mathcal{D})$ consists of functions whose partial derivatives up to total order $n$ are essentially bounded on $\mathcal{D}$. For each coordinate $j=1,\dots,p$, the first-order partial derivative of $g$ is
\begin{align*}
    \frac{\partial g_0}{\partial x_j}(x)=\theta_{0j}\leq 1 ,
\end{align*}
which is constant and hence belongs to $L^\infty(D)$. Moreover, for any multi-index $k$ with $|k|\ge 2$, we have
\begin{align*}
    D^k g_0(x)=0 \quad \text{a.e. on } \mathcal{D},
\end{align*}
where $D$ is the derivative sign and therefore $D^k g_0\in L^\infty(\mathcal{D})$. Since $\mathcal{D}$ is bounded, $g_0$ itself is also essentially bounded on $\mathcal{D}$, with
\begin{align*}
    \|g_0\|_{L^\infty(\mathcal{D})} \le \sum_{j=1}^p |\theta_{0j}|.
\end{align*}
Consequently, all weak derivatives $D^k g_0$ with $|k|\le n$ exist and
belong to $L^\infty(\mathcal{D})$, implying that $g_0 \in W^{n,\infty}\big([0,1]^d\big)$ for all integers $n\ge 1$ .
\end{itemize}

We next show that minimizing training loss is equivalent to minimizing the distance between $g(X;\beta)$ and $g_0(X).$  Define the population \F risk
\begin{align*}
    R(g) = \mathbb{E} \bigl[ d^2(Y, \zeta(g(X;\beta)))].
\end{align*}
Note that $R(g)$ is minimized at $g_0$, and for any $g$ with $R(g) \le R(g_0) + \epsilon_n$ and $\epsilon_n \to 0$, we have
\[
\mathbb{E}_X \bigl[ (g(X;\beta) - g_0(X))^2 \bigr] \to 0,
\]
under Assumptions \ref{ass:nn}, \ref{ass:ident} and \ref{ass:density}. The minimum of $R(g)$ is achieved at $g_0$ by construction of the model (single-index structure). Suppose $R(g) - R(g_0) \le \varepsilon_n \to 0$. If $\mathbb{E}_X [(g(X;\beta_{\delta}) - g_0(X))^2] \not\to 0$, then by Assumption \ref{ass:ident} and compactness, the directions differ by a positive angle, implying $R(g) - R(g_0) \ge \eta > 0$ on a set of positive probability (from the density assumption), contradicting $\varepsilon_n \to 0$. Thus the index error must vanish.

The true index $g_0(x) = x^\top \theta_0$ is continuous on compact $\mathcal{D}$ with radius as $C_0$. By the universal approximation theorem by \citet{yarotsky2017error}, for any $\delta > 0$, there exists a DNN $g(\cdot;\beta_\delta)$ satisfying the bound. Then  for every $\delta > 0$ there exists a Leaky ReLU neural network $g(\cdot;\beta_{\delta})$ such that 
\begin{equation*}
    \begin{aligned}
    \sup_{x \in \mathcal{D}} \bigl| g(x;\beta_{\delta}) - g_0(x) \bigr| &< \delta,
    \end{aligned}
\end{equation*}
where $\sup_{x \in \mathcal{D}} \bigl| g(x;\beta_{\delta}) - g_0(x) \bigr|=\sup_{x \in \mathcal{D}} \bigl| x^\top (\widehat{\theta}-\theta_0) \bigr| < \delta.$ We can find the largest Euclidean ball with radius $C_0$ and at center $x_0\in\mathbb{R}^p$ such that 
\begin{align*}
x_0 + B_{C_0}
\;:=\;
\{x_0 + u : \|u\|_2 \le C_0\}
\;\subset\;
\mathcal{D},
\end{align*}
where $B_{C_0}$ is a Euclidean ball around the origin with radius $C_0.$ Since $x_0+B_{C_0}\subset\mathcal{D}$, the two points
\[
x_\pm := x_0 \pm C_0\frac{\widehat{\theta}-\theta_0}{\|\widehat{\theta}-\theta_0\|_2}
\]
belong to $\mathcal{D}$. Therefore,
\begin{align*}
\sup_{x\in\mathcal{D}}|x^\top (\widehat{\theta}-\theta_0)|
\;\ge\;
\max\{|x_+^\top (\widehat{\theta}-\theta_0)|,\ |x_-^\top (\widehat{\theta}-\theta_0)|\}.
\label{eq:step1}
\end{align*}
A direct calculation yields
\[
x_\pm^\top (\widehat{\theta}-\theta_0)
=
x_0^\top (\widehat{\theta}-\theta_0) \pm C_0\|\widehat{\theta}-\theta_0\|_2.
\]
Let $a:=x_0^\top \widehat{\theta}-\theta_0\in\mathbb{R}$ and $b:=C_0\|\widehat{\theta}-\theta_0\|_2\ge 0$. Then
\begin{align*}
    \max\{|x_+^\top (\widehat{\theta}-\theta_0)|,\ |x_-^\top (\widehat{\theta}-\theta_0)|\}
    =
    \max\{|a+b|,\ |a-b|\}.
\end{align*}
We now use the elementary identity, valid for all $a\in\mathbb{R}$ and $b\ge 0$,
\begin{align*}
\max\{|a+b|,\ |a-b|\} = |a|+b.
\label{eq:max_identity}
\end{align*}
It then gives
\begin{align*}
    \sup_{x\in\mathcal{D}}|x^\top (\widehat{\theta}-\theta_0)|
    \;\ge\;
    |x_0^\top (\widehat{\theta}-\theta_0)| + C_0\|\widehat{\theta}-\theta_0\|_2
    \;\ge\;
    C_0\|\widehat{\theta}-\theta_0\|_2.
\end{align*}
Thus,
\begin{align*}
\|\widehat{\theta}-\theta_0\|_2 \le \frac{1}{C_0}\sup_{x\in\mathcal{D}}|x^\top (\widehat{\theta}-\theta_0)| < \delta/C_0.
\end{align*}
\end{proof}

\begin{proof}[Proof of Lemma \ref{lm:dfr_results}]
    Based on Theorem \ref{thm:uat_leaky}, we can assume that
    \begin{align*}
        \|\widehat{\theta}-\theta_0\|_2 = O_p(b_n),
    \end{align*}
    and the proof is a direct result from \citet{iao2025deep}, where the dimension is one.
\end{proof}

\begin{proof}[Proof of Theorem \ref{thm:convergence_rate}]
Let $z = g_0(x)$ be the true (unknown) index value for the new test point $x$, and let $\widehat{z} = g(x; \beta_{\delta})$ be the estimated index produced by the fitted deep network.

We decompose the prediction error as follows:
\begin{align}
    & d\bigl( \widehat{\zeta}_h(\widehat{z}; \{\widehat{Z}_i, Y_i\}_{i=1}^n),\ \zeta(z) \bigr) \nonumber \\
    &\qquad \leq d\bigl( \widehat{\zeta}_h(\widehat{z}),\ \widetilde{\zeta}_h(\widehat{z}) \bigr) 
    + d\bigl( \widetilde{\zeta}_h(\widehat{z}),\ \widetilde{\zeta}_h(z) \bigr) 
    + d\bigl( \widetilde{\zeta}_h(z),\ \zeta(z) \bigr),
\end{align}
where $d\bigl( \widehat{\zeta}_h(\widehat{z}),\ \widetilde{\zeta}_h(\widehat{z}) \bigr)$ and $d\bigl( \widetilde{\zeta}_h(\widehat{z}),\ \widetilde{\zeta}_h(z) \bigr)$ are bounded by $O_p\!\left\{\left(h^{-2}b_n\right)^{1/(\gamma_3-1)}\right\}$ under Lemma \ref{lm:dfr_results}, and the last term is bounded by $O_p\!\left\{h^{2/(\gamma_1-1)}+(nh)^{-1/(2\gamma_2-2)}\right\}$ in \citet{petersen2019frechet}, which is the convergence rate for empirical local \F regression estimator to the true conditional \F mean.
\end{proof}

\section{Additional Plots for Simulations} \label{app:additionplots}
Figure \ref{fig:combined} summarizes the predictive performance of DeSI and competing methods across all simulation settings. For each response space, we report the MPE at sample sizes 200, 500, and 1000. The results show that DeSI generally improves as the sample size increases and is competitive or favorable across heterogeneous output types, supporting the flexibility of the proposed DeSI framework.
\begin{figure}[t]
    \centering
    \includegraphics[width=0.95\textwidth]{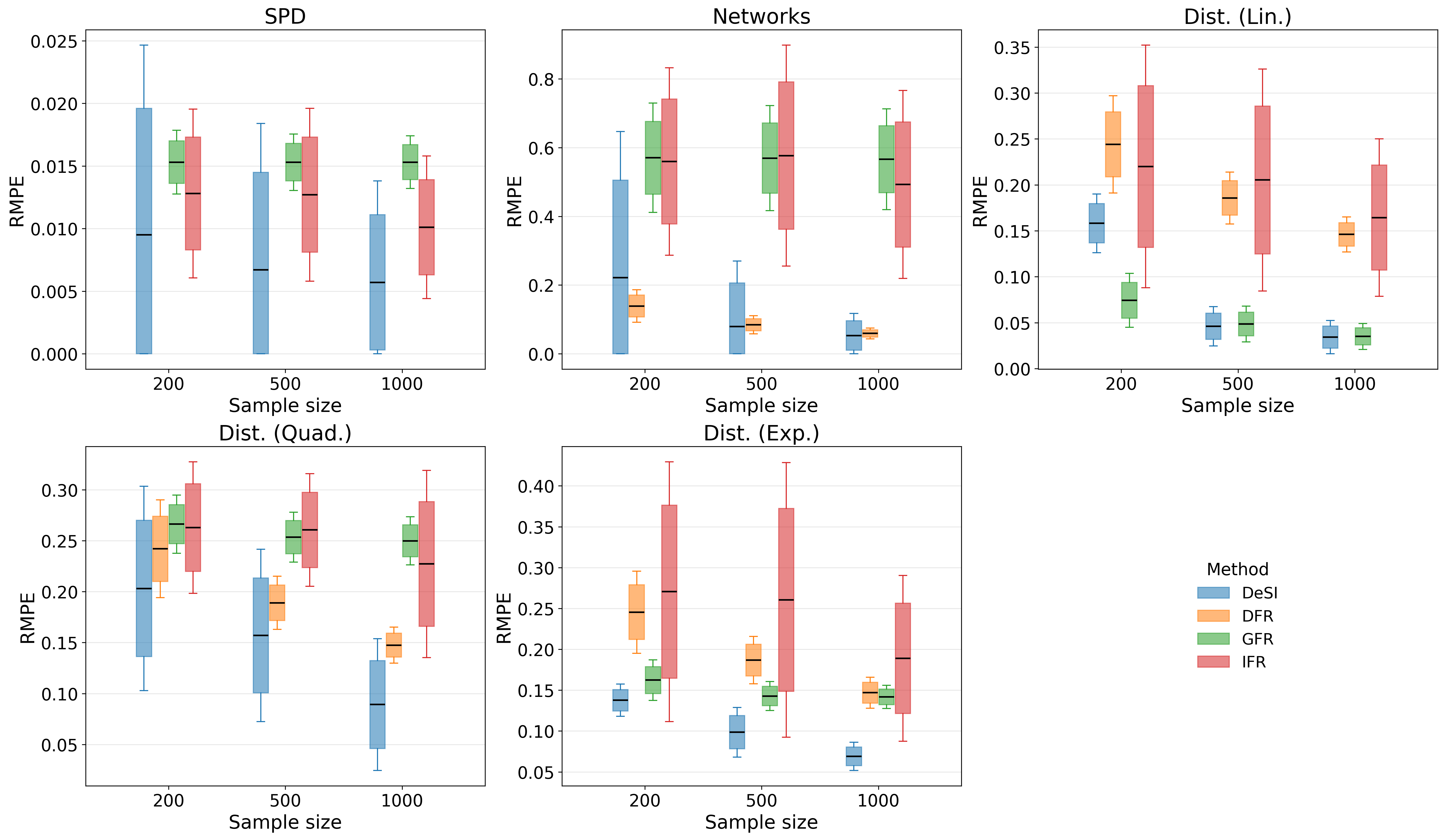}
    \caption{MPE across five different simulation settings. Boxplots summarize the mean and standard deviation of prediction errors over 200 Monte Carlo replications for each competing method. The panels display the results for SPD matrix outputs, network-valued outputs, and distribution-valued outputs under linear, quadratic, and exponential link functions. Across most settings, DeSI achieves lower MPE as the sample size increases, showing improved prediction accuracy with more training data. The legend in the bottom-right panel identifies the compared methods.}
    \label{fig:combined}
\end{figure}

Figure~\ref{fig:theta} visualizes  the estimation accuracy for the single index $\theta_0$. Across all simulation settings, DeSI yields smaller MPE than IFR, indicating more accurate recovery of the underlying single-index structure. The advantage is especially pronounced in the network and distributional settings, where the DeSI estimates remain stable as the sample size increases. The proposed DeSI framework improves not only output prediction but also estimation of the latent index direction.
\begin{figure}[t]
    \centering
    \includegraphics[width=0.95\linewidth]{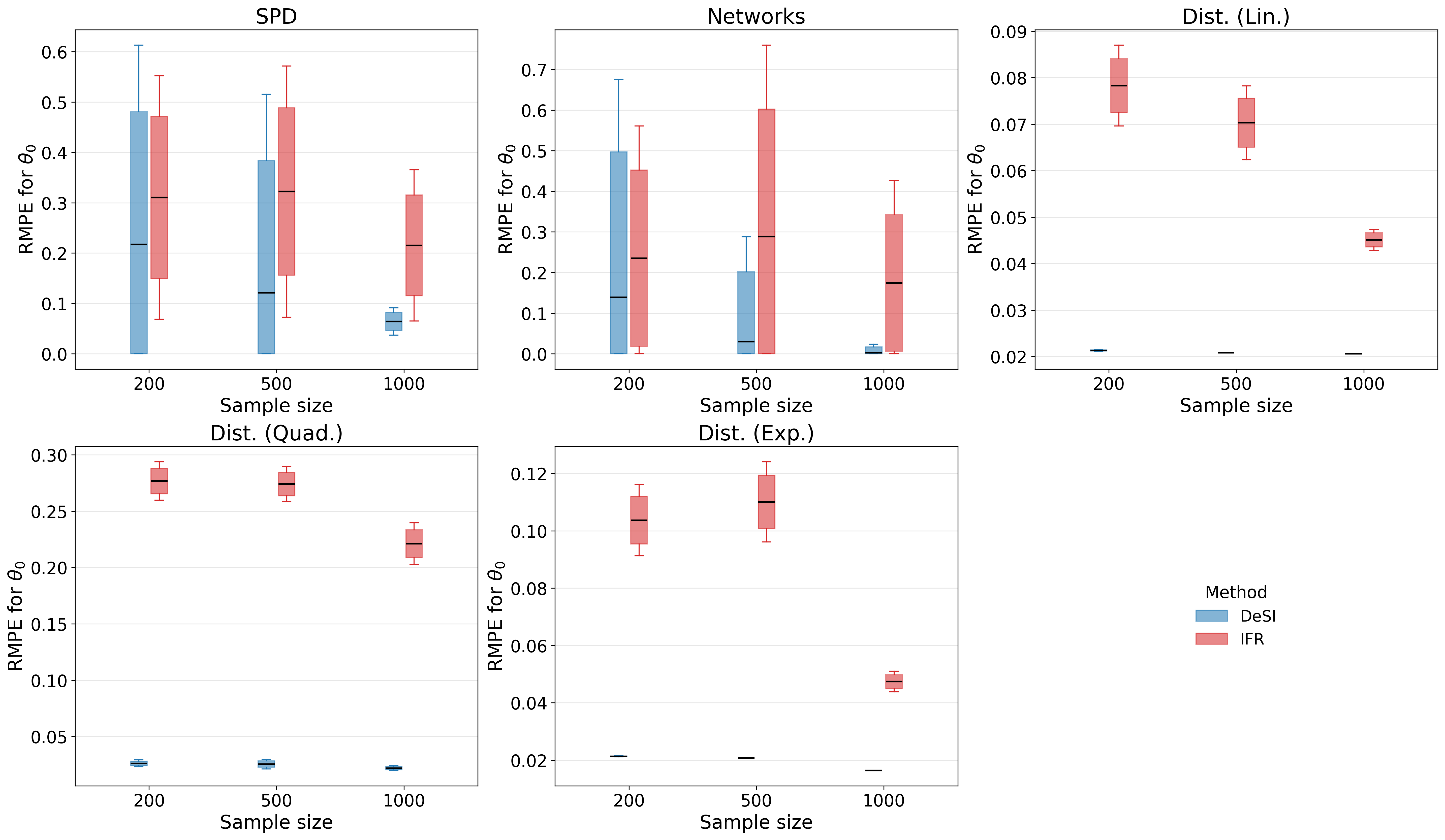}
    \caption{MPE for estimating the true single-index direction $\theta_0$ across five simulation settings. The boxplots display the mean and standard deviation over 200 Monte Carlo replications for DeSI and IFR. The panels provide results for  SPD matrix outputs, network-valued outputs, and distribution-valued outputs under linear, quadratic and exponential links. Lower values indicate more accurate recovery of the single index.}    
    \label{fig:theta}
\end{figure}

\section{Robustness Beyond the Linear Single-index Setting} \label{app:robust}
To investigate robustness, we considered a data-generating mechanism that departs from the single-index assumption. Specifically, we generated distributional outputs as normal distributions
\[
Y_i = N(\mu_i,1),
\]
where
\[
\mu_i \sim N\left(X_{i1}^2 + X_{i2}^2 + X_{i3}^2 + X_{i4}^2, 0.25\right),
\]
and the inputs $X_i$ were generated as in Section \ref{chap63}. This corresponds to an additive model with component functions $f_j(x)=x^2$ for $j=1,\ldots,4$. We compared four settings: GFR with input $X_i$, GFR with input $X_i^2$, DeSI with input $X_i$, and DeSI with input $X_i^2$. The MPEs based on 100 repeated runs are reported in Table~\ref{tab:additive_robustness}.

\begin{table}[htbp]
\centering
\caption{MPEs for the additive model robustness experiment. Standard deviations over 100 repeated runs are shown in parentheses.}
\label{tab:additive_robustness}
\begin{tabular}{lccc}
\toprule
Method & $n=200$ & $n=500$ & $n=1000$ \\
\midrule
GFR with input $X$ 
& $0.5889\ (0.0270)$ 
& $0.5905\ (0.0181)$ 
& $0.5872\ (0.0252)$ \\

GFR with input $X^2$ 
& $0.0599\ (0.0225)$ 
& $0.0400\ (0.0146)$ 
& $0.0284\ (0.0092)$ \\

DeSI with input $X$ 
& $0.3977\ (0.0414)$ 
& $0.3765\ (0.0367)$ 
& $0.3810\ (0.0329)$ \\

DeSI with input $X^2$ 
& $0.0731\ (0.0230)$ 
& $0.0491\ (0.0211)$ 
& $0.0439\ (0.0149)$ \\
\bottomrule
\end{tabular}
\end{table}

When using $X_i^2$ as the input, GFR is correctly specified and consequently achieves the best overall performance, as expected. Under this ideal setting, DeSI does not outperform GFR. In contrast, when both methods are applied to the original input $X_i$, DeSI consistently outperforms GFR, suggesting greater robustness when the underlying model structure is misspecified.

\section{Computational Cost and Sensitivity Analysis}\label{app:effandsen}
The average training times, measured in seconds, in the network-valued setting are reported in Table~\ref{tab:training_time}.

\begin{table}[htbp]
\centering
\caption{Average training time in seconds for the network-valued setting.}
\label{tab:training_time}
\begin{tabular}{lccc}
\toprule
Method & $n=200$ & $n=500$ & $n=1000$ \\
\midrule
DeSI & $1.20$ s & $1.40$ s & $5.93$ s \\
IFR  & $\sim 2$ h & $\sim 3$ h & $\sim 5$ h \\
GFR  & $0.83$ s & $1.16$ s & $2.76$ s \\
DFR  & $23.72$ s & $71.31$ s & $181.88$ s \\
\bottomrule
\end{tabular}
\end{table}

While GFR is the most computationally efficient method and is competitive with DeSI in terms of training time, it typically has lower predictive accuracy because it is much more restrictive, similar to linear regression. A major additional disadvantage of GFR compared with DeSI is that it does not provide the same level of interpretability. Compared with IFR and DFR, DeSI is substantially more efficient in practice, while still achieving strong predictive performance and retaining the interpretability of the single-index structure.

We also investigated the stability of jointly optimizing the bandwidth and DNN parameters through a sensitivity analysis in the case of network-valued outputs. Specifically, we conducted 100 repeated runs with bandwidth initializations ranging from $0.1$ to $0.9$. The mean prediction errors and corresponding standard deviations are summarized in Table~\ref{tab:bandwidth_sensitivity}.

\begin{table}[htbp]
\centering
\caption{Sensitivity analysis for bandwidth initialization in the network-valued setting. MPEs are reported with standard deviations over 100 repeated runs shown in parentheses.}
\label{tab:bandwidth_sensitivity}
\begin{tabular}{lccc}
\toprule
Initial bandwidth & $n=200$ & $n=500$ & $n=1000$ \\
\midrule
$0.1$ 
& $0.0716\ (0.0517)$ 
& $0.1071\ (0.1640)$ 
& $0.0373\ (0.0142)$ \\

$0.3$ 
& $0.1188\ (0.0993)$ 
& $0.0515\ (0.0334)$ 
& $0.0539\ (0.0394)$ \\

$0.5$ 
& $0.0999\ (0.0525)$ 
& $0.0497\ (0.0258)$ 
& $0.0562\ (0.0283)$ \\

$0.7$ 
& $0.2813\ (0.3074)$ 
& $0.0391\ (0.0144)$ 
& $0.0353\ (0.0116)$ \\

$0.9$ 
& $0.1572\ (0.1875)$ 
& $0.0438\ (0.0176)$ 
& $0.0470\ (0.0238)$ \\

\bottomrule
\end{tabular}
\end{table}
Overall, the model is reasonably stable across a wide range of bandwidth initializations, especially for moderate and large sample sizes. More notable variability is observed for small $n$, which is expected due to limited data. In practice, we also include a mild regularization on the bandwidth to prevent degenerate solutions.

\section{Choice of Hyperparameters}\label{app:hyper}
The hyperparameters for DeSI can be selected using a grid search over the candidate values listed in Table \ref{tab:grid}. The optimal combination of hyperparameters is chosen to minimize the mean prediction error for the validation data.
\begin{table}[H]
\centering
\caption{Hyperparameter settings.}
\label{tab:grid}
\begin{tabular}{lccccc}
\toprule
$\lambda$ 
& $0.0005$ & $ 0.0010$ & $0.0050$ & $0.0100$ & $0.0500$ \\
Dropout rate 
& $0.1500$ & $0.2000$ & $0.2500$ & $0.3000$ & $0.3500$ \\
Learning rate 
& $0.0010$ & $0.0050$ & $0.0100$ & $0.0500$ & $0.1000$ \\
Leaky ReLU slope $\alpha$ 
& $0.0050$ & $0.0100$ & $0.05000$ & $0.1000$ & $0.2000$ \\
Number of hidden layers 
& $2$ & $3$ & $4$ & $5$ & $6$ \\
Number of neurons 
& $8$ & $16$ & $32$ & $64$ & $128$ \\

\bottomrule
\end{tabular}
\end{table}

\clearpage
\end{document}